\documentclass[accepted]{uai2026} 
                        
\PassOptionsToPackage{english}{babel}
\usepackage{times}

\PassOptionsToPackage{inline,shortlabels}{enumitem}
\PassOptionsToPackage{colorlinks,linkcolor=black,citecolor=black,urlcolor=black,bookmarks=true,hypertexnames=false}{hyperref}

\usepackage{aux/watml}
\usepackage{tikz}
\usepackage{tikz-3dplot} 

\usepackage{booktabs,tabularx,array}
\newcolumntype{C}{>{\centering\arraybackslash}X}
\usepackage{xcolor}
\usepackage{amssymb}
\usepackage{float}
\usepackage[capitalize,noabbrev]{cleveref}

\newcommand{\MOO}{\textsf{MOO}\xspace}
\definecolor{goodgreen}{rgb}{0.0, 0.5, 0.0}

\renewcommand{\arraystretch}{1.2}

\usepackage[table]{xcolor}
\definecolor{lavender}{HTML}{D9CFE3}
\definecolor{lightred}{HTML}{F4CCCC}
\definecolor{lightgreen}{HTML}{D9EAD3}
\definecolor{lightblue}{HTML}{CFE2F3}

\newcommand{\RepeatRed}[2]{%
  {\color{red}\Repeat{#1}{\color{black}#2\color{red}}}%
}

\title{A Unified Framework for Gradient Aggregation in Multi-Objective Optimization}

\author[1,3]{Zeou Hu}
\author[2]{Kelvin Ho}
\author[1,3]{Yaoliang Yu}
\affil[1]{%
    Cheriton School of Computer Science\\
    University of Waterloo\\
    Waterloo, ON, Canada
}
\affil[2]{%
    Computer Science\\
    The Chinese University of Hong Kong
}
\affil[3]{%
    Vector Institute
}

\begin{document}
\maketitle

\begin{abstract}
Many machine learning problems involve multiple inherent trade-offs that are best addressed by gradient-based multi-objective optimization (\MOO) algorithms. Existing methods are often proposed with various motivations, analyzed case by case, and differ algorithmically in how the component gradients are aggregated at each step. In this work, we develop a unifying framework for gradient aggregation in \MOO, establishing (optimal) rates of convergence to Pareto stationarity—the standard measure of performance in \MOO. Central to our analysis is a sufficient alignment condition, from which we derive a theorem showing that non-conflicting directions, when chosen within the convex hull of gradients, form a fundamental sufficient condition for convergence. We further show that feasibility can be ensured through projection onto the dual cone, broadening the scope of methods that admit convergence guarantees. In parallel, we present a primal optimization perspective of gradient aggregation that encompasses established algorithms, clarifies their theoretical relationships, and enables the design of new variants. As an illustration, we introduce capped MGDA, derived from a CVaR-based formulation, and demonstrate its robustness in adversarial federated learning. Finally, we validate our theory through experiments on synthetic problems and practical benchmarks.
\end{abstract}

\section{Introduction}
Many problems in machine learning are inherently multi-objective, requiring a balance between multiple, often competing, performance criteria. This tension is evident across diverse applications, from ensuring fairness alongside accuracy in classification systems, to balancing heterogeneous clients' performances in federated learning (FL), and to jointly mastering different tasks with a shared model in multi-task learning (MTL). To address such challenges in modern deep learning, gradient-based \MOO methods have become indispensable, offering scalability to high-dimensional models and seamless integration with existing training pipelines. In these methods, the key algorithmic challenge is to determine, at each iteration, an effective update direction $\dv$ synthesized from the component gradients, that can guide learning across competing objectives.

Recent work on gradient aggregation in \MOO spans a range of algorithms—e.g., MGDA \citep{Desideri12}, Nash-MTL \citep{NavonSAMKCF22}, FairGrad \citep{BanJi24}, and UPGrad \citep{quinton2024jacobian}, among others—each proposing a particular rule for constructing $\dv$ from component gradients. These methods were developed under disparate motivations and, when available, their convergence analyses are established case by case, tied to method-specific assumptions and proofs. As a result, while these methods have provided valuable insights, there is still no general framework to explain what properties of an update direction ensure convergence to Pareto stationarity or how these different methods are connected. This gap highlights the need for a unifying theory that clarifies the conditions for convergence and offers a principled basis for designing new aggregation schemes.

In this work, we develop a general theoretical framework for gradient-based \MOO. Our first main result (\Cref{thm:PS-fd} and \Cref{thm:PS-cvx}) establishes a broad \emph{alignment condition} \eqref{eq:PS-cvx} on $\dv_t$ that guarantees convergence to Pareto stationarity. This versatile result makes \Cref{thm:PS-cvx} pivotal and serves as the cornerstone of our analysis. Building on it, we derive \Cref{thm:cvg_nonconflict}, which specializes condition \eqref{eq:PS-cvx} to the convex hull and non-conflicting requirements, thereby explaining the success of prominent non-conflicting aggregation rules. We further show that feasibility can be restored by projection onto the dual cone, leading to \Cref{thm:PS_proj_cvg}. In parallel, we introduce the (primal) optimization subproblem perspective of gradient aggregation \eqref{eq:sd-conic}, and establish sufficient conditions under which the resulting aggregation is a conic combination of gradients, ensuring convergence (\Cref{thm:PSreg}). This perspective subsumes existing formulations and provides a principled recipe for designing new ones. Together, these results form a coherent unifying framework that simplifies theoretical analysis, clarifies prior work, and opens new design possibilities. We summarize our contributions below:

\begin{itemize}[leftmargin=*]
    \item We establish a general alignment criterion for gradient-based \MOO, yielding a broadly applicable template for analyzing convergence.
    \item We uncover the theoretical importance of non-conflicting directions in \MOO as a fundamental condition leading to convergence.
    \item We study the primal optimization subproblem formulation of gradient aggregation, providing sufficient conditions for the resulting aggregation to be in the conic hull and converge,  subsuming several existing methods (\eg, LS, MGDA, Nash-MTL) and clarifying their relationships.
    \item We design and analyze a novel method, capped MGDA, derived from a CVaR-based primal formulation, illustrating our framework’s ability to generate new aggregations.
    \item We validate our theoretical results through experiments on synthetic and fairness benchmarks, and on adversarial federated learning, demonstrating the robustness of capped MGDA.
\end{itemize}

\begin{figure*}[t]
\centering
\resizebox{\textwidth}{!}{%
\begin{tikzpicture}[
    >=stealth,
    node distance=1.1cm and 1.7cm,
    every node/.style={font=\small},
    box/.style={rounded corners, draw, thick,
                minimum width=2.6cm, minimum height=0.9cm,
                align=center},
    thm1/.style={box, fill=black!40},
    corr1/.style={box, fill=lavender},
    thm2/.style={box, fill=lightred},
    corr2/.style={box, fill=lightgreen},
    thm4/.style={box, fill=lightblue},
    angle/.style={box},
]

\node[thm1] (thm1) {Theorem 1};
\node[above=0.05cm of thm1] {\scriptsize non-convex};

\node[thm1, below=1.4cm of thm1] (thm3) {Theorem 3};
\node[above=0.05cm of thm3] {\scriptsize convex};

\node[corr1, right=2.1cm of thm1] (corr1) {Corollary 1};
\node[above=0.05cm of corr1] {\scriptsize condition (A)};

\node[angle, below right=0.7cm and 2.1cm of thm1] (angle) {Angle\\Constraint};

\node[thm2, right=2.0cm of corr1] (thm2) {Theorem 2};
\node[above=0.05cm of thm2] {\scriptsize convex hull \& non-conflicting};

\node[corr2, below=1.2cm of thm2] (corr2) {Corollary 2};

\node[thm4, below=1.2cm of corr2] (thm4) {Theorem 4};
\node[below=0.05cm of thm4] {\scriptsize primal-dual};

\node[below=0.05cm of corr2] {\scriptsize projection to dual cone};

\node[draw, thick, minimum width=2.3cm, align=left, right=1.4cm of thm2] (methods1)
{
\scriptsize • MGDA\\[-2pt]
\scriptsize • Nash-MTL*\\[-2pt]
\scriptsize • UPGrad*\\[-2pt]
\scriptsize • DualProj*\\[-2pt]
\scriptsize • PCGrad*
};

\node[draw, thick, minimum width=2.2cm, align=left, right=1.4cm of corr2] (methods2)
{
\scriptsize • UPGrad\\[-2pt]
\scriptsize • DualProj\\[-2pt]
\scriptsize • Greedy-DCP
};

\node[draw, thick, minimum width=2.8cm, align=left, right=1.4cm of thm4] (methods3)
{
\scriptsize • Power mean: FairGrad\\[2pt]
\scriptsize • Convex risk measure: CVaR
};

\draw[->, thick] (thm1) -- node[above, sloped]{\scriptsize convex hull} (corr1);
\draw[->, thick] (thm1) -- node[below, sloped]{\scriptsize $m = 1$} (angle);

\draw[->, thick] (corr1) -- (thm2);
\draw[->, thick] (thm2) -- (corr2);

\draw[->, thick] (corr1.south east) to[bend right=15] (thm4.west);

\draw[->, thick, dashed] (thm2) -- (methods1);
\draw[->, thick, dashed] (corr2) -- (methods2);
\draw[->, thick, dashed] (thm4) -- (methods3);

\end{tikzpicture}%
}
\caption{An overview of the relationships between our key theoretical results. Our pivotal result, \Cref{thm:PS-fd} and \Cref{thm:PS-cvx}, establishes a broadly applicable convergence guarantee that requires condition~\eqref{eq:PS-cvx}. From this result, we then derive \Cref{thm:cvg_nonconflict}, \Cref{thm:PS_proj_cvg}, and \Cref{thm:PSreg}, which provide more insightful and readily verifiable criteria for a broad class of \MOO aggregation methods.}
\label{fig:main_flowchart}
\end{figure*}

\section{Related Works}
Multi-objective optimization (\MOO) and Pareto solutions have been extensively studied, with classical approaches such as evolutionary algorithms \citep{DebPAM02}. However, modern ML problems are large-scale and differentiable, making gradient-based methods more appropriate. Therefore, in this work we focus on gradient-based \MOO, in particular, multi-objective gradient aggregation.

\textbf{Gradient-based \MOO.}
Gradient-based \MOO optimizes multiple objectives using gradient information. A foundational algorithm in this regard is MGDA \citep{Mukai80,FliegeSvaiter00,Desideri12}, which computes a \emph{non-conflicting} direction by solving for the minimum-norm element in the convex hull of gradients. \citet{FliegeVV19} provided a detailed convergence analysis of MGDA. Subsequent works (\eg, \citealt{Fliege2009newton,MontonenKM18,tanabe2019proximal,AssunccaoFP21,tanabe2023accelerated}) have also  
extended classical single-objective methods to the multi-objective setting. Another important line of research studies stochastic variants of MGDA \citep{MercierPD18,LiuVicente21,ZhouZJZGZ22,FernandoSLCMC23,ChenFYC23,XiaoBJ23}, motivated by their practical relevance in machine learning, particularly for mini-batch training of deep neural networks.

\textbf{Multi-task Learning (MTL) and Multi-Objective Gradient Aggregation (MOGA).}
MTL aims to train a single model that performs well across multiple tasks. \citet{SenerKoltun18} first cast MTL as a multi-objective optimization problem and applied MGDA to address it. Since then, a rich line of research in MTL has proposed general-purpose multi-objective gradient aggregation methods, focusing on novel gradient aggregation schemes to mitigate task conflicts. Examples include PCGrad \citep{YuKGLHF20}, which projects each gradient onto the normal plane of others; CAGrad \citep{LiuLJSL21}, which balances average and worst-case objectives by constraining the search region; and Nash-MTL \citep{NavonSAMKCF22}, which frames MTL as a bargaining game. Other approaches include IMTL-G \citep{LiuLKXCYLZ21} and FairGrad \citep{BanJi24}, among others.

\textbf{Non-conflicting direction and the Dual cone.} 
The notion of a non-conflicting update direction has appeared in various \MOO-related works \citep{Desideri12,YuKGLHF20,LiuLJSL21}, though often without sufficient formalization or emphasis. Recent studies clarified that this criterion corresponds to a dual cone constraint over the gradients ${\gv_k}$, which can be explicitly enforced to guarantee conflict-free updates \citep{HwangLim2024,quinton2024jacobian}. While \citet{quinton2024jacobian} acknowledge the relevance of non-conflicting directions and propose projection onto the dual cone to ensure them, they do not investigate its theoretical significance. In contrast, our work rigorously establishes non-conflicting as a unifying sufficient condition for convergence to Pareto stationarity (see \Cref{thm:cvg_nonconflict}). We show that non-conflicting is not merely a preference, but a fundamental condition for convergence guarantees—something not recognized in prior work.

\section{Preliminaries}

This section reviews the concepts of Pareto optimality, Pareto stationarity, a measure for quantifying the latter, and two key cones associated with the Jacobian matrix in multi-objective optimization.

\subsection{Multi-Objective Optimization (\MOO)}
\label{sec:MOM}
In mathematical terms, a Multi-Objective Optimization (\MOO) problem can be written as:
\begin{equation}
\label{eq:MOM}
\begin{aligned}
\min_{\wv \in \RR^d}\quad & \fv(\wv), \\
\text{where}\quad & \fv(\wv) := \bigl(f_{1}(\wv), \ldots, f_{m}(\wv)\bigr).
\end{aligned}
\end{equation}
and the minimum is defined \wrt the \emph{partial} ordering:
\begin{align}
\label{eq:po}
\fv(\wv) \leq \fv(\zv) \iff \forall i = 1, \ldots, m, ~f_{i}(\wv) \leq f_{i}(\zv). 
\end{align}
Unlike single-objective optimization, with multiple objectives it is possible that
\begin{align}
\fv(\wv) \not\leq \fv(\zv) \mbox{ and } \fv(\zv) \not\leq \fv(\wv),
\end{align}
in which case we say $\wv$ and $\zv$ are not comparable. As a result, a \MOO problem typically admits a set of optimal solutions (\aka \emph{Pareto Optimal}), whose objective values form the \emph{Pareto front}.

\subsection{Pareto Optimality and Pareto Stationarity}

\begin{definition}[Pareto Optimality]
    We call $\wv^*$ a \emph{Pareto optimal} solution of \eqref{eq:MOM} if its objective value $\fv(\wv^*)$ is a minimum element w.r.t. the partial ordering in \eqref{eq:po}; equivalently, 
    \begin{align}
    \forall \wv, ~\fv(\wv) \leq \fv(\wv^*) \implies \fv(\wv) = \fv(\wv^*). 
    \end{align}
\vspace{-2em}    
\end{definition}

In other words, it is not possible to improve \emph{any} component objective in $\fv(\wv^*)$ without compromising \emph{some} other objective. Similarly, we call $\wv^*$ \emph{weakly} Pareto optimal if it is not possible to improve \emph{all} objectives in $\fv(\wv^*)$, \ie, there does not exist $\wv$ such that $\fv(\wv) < \fv(\wv^*)$.

Next, we recall the concept of \emph{Pareto Stationarity} (also referred to as \emph{Pareto Criticality}), which is the first-order necessary condition for Pareto optimality.
\begin{definition}[Pareto Stationarity]
\label{def:PS}
We call $\wv^*$ Pareto stationary (PS) iff 
\begin{align}
\label{eq:PS}
\zero \in \conv\{\nabla f_1(\wv^*), \cdots, \nabla f_m(\wv^*)\},
\end{align}
\ie, there exists some $\lambdav \in \Delta$ (the probability simplex) such that $\sum_{i=1}^m \lambda_i\nabla f_i(\wv^*) = \zero$.  
\end{definition}
\vspace{-.5em}

The relevance of Pareto stationarity is captured in the following lemma:
\begin{lemma}[\eg, \citealt{Mukai80}, Thm 1]
\label{thm:ParetoStat}
Any Pareto optimal solution is Pareto stationary. Conversely, if all functions are convex (resp., strictly  convex), then any Pareto stationary solution is weakly Pareto optimal (resp., Pareto optimal).
\end{lemma}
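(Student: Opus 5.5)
We prove the two directions separately, assuming throughout that each $f_i$ is differentiable.

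\textbf{Pareto optimal implies Pareto stationary.} We argue by contraposition. Suppose $\zero \notin C := \conv\{\nabla f_1(\wv^*), \ldots, \nabla f_m(\wv^*)\}$. Since $C$ is a nonempty compact convex set, it has a unique minimum-norm element $\dv^* \neq \zero$. The projection characterization gives $\langle \dv^*, \cv - \dv^* \rangle \geq 0$ for every $\cv \in C$, so $\langle \nabla f_i(\wv^*), \dv^* \rangle \geq \|\dv^*\|^2 > 0$ for all $i$. Along the direction $-\dv^*$, every directional derivative is strictly negative. A first-order Taylor expansion then gives $f_i(\wv^* - \eta \dv^*) < f_i(\wv^*)$ for all $i$, for all sufficiently small $\eta > 0$. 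Because there are finitely many indices, a single $\eta$ works for all of them. The point $\wv^* - \eta\dv^*$ therefore strictly dominates $\wv^*$, contradicting Pareto optimality (indeed even weak Pareto optimality). The only mildly delicate step is this uniform choice of $\eta$, which finiteness of $m$ handles.

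\textbf{Converse under convexity.} Let $\lambdav \in \Delta$ satisfy $\sum_i \lambda_i \nabla f_i(\wv^*) = \zero$, and set $g := \sum_i \lambda_i f_i$. Then $g$ is convex with $\nabla g(\wv^*) = \zero$, so $\wv^*$ minimizes $g$. If some $\wv$ had $\fv(\wv) < \fv(\wv^*)$ componentwise, then $g(\wv) < g(\wv^*)$, because $\lambdav \geq \zero$ and $\sum_i \lambda_i = 1 > 0$. This contradicts minimality, so $\wv^*$ is weakly Pareto optimal.

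\textbf{Strictly convex case.} Now $g$ is strictly convex: it is a convex combination of strictly convex functions with at least one positive weight. Hence $\wv^*$ is its unique minimizer. Suppose $\fv(\wv) \leq \fv(\wv^*)$. Then $g(\wv) \leq g(\wv^*)$, so $\wv = \wv^*$ by uniqueness, and therefore $\fv(\wv) = \fv(\wv^*)$. This proves Pareto optimality.
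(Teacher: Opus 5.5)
Your proof is correct and complete. The paper does not prove this lemma; it only cites Mukai (1980, Thm.~1). So there is no in-paper argument to compare against. What you give is the standard proof: the min-norm element of the convex hull serves as a common descent direction for the forward implication, and first-order optimality of the scalarization $\sum_i \lambda_i f_i$ gives the converse.

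A few remarks:
\begin{itemize}
\item Your forward argument proves more than is stated. As you note, it only uses that no point near $w^*$ strictly improves every objective. So even local weak Pareto optimality already forces Pareto stationarity.
\item The direction $-\mathbf{d}^*$ you construct is exactly the MGDA step. The inequality $\langle \nabla f_i(w^*), \mathbf{d}^*\rangle \ge \|\mathbf{d}^*\|^2$ is precisely the non-conflicting property of MGDA that the paper later relies on via Theorem~2.
\item In the strictly convex case you correctly use that every $f_i$ is strictly convex. If only some were, the multiplier $\lambda$ could put zero weight on them, and $g$ need not be strictly convex.
\end{itemize}
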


\textbf{Measure of Pareto Stationarity.} To quantify the degree of Pareto stationarity, we recall the following metric (\eg, \citealt{Mukai80,ChenFYC23,ZhangXJZ24}):
\begin{equation}
\label{eq:PS-quant}
\begin{aligned}
& \gamma(\wv) = \gamma_{\fv}(\wv)
:= \min_{\lambdav \in \Delta}\, \|J_\fv(\wv)\lambdav\|, \\
& \text{where }\; J_{\fv}(\wv) := [\nabla f_1(\wv), \ldots, \nabla f_m(\wv)].
\end{aligned}
\end{equation}
Clearly, $\gamma(\wv) = 0$ iff $\wv$ is Pareto stationary. When $m = 1$ (single-objective), $\gamma(\wv) = \|\nabla f(\wv)\|$ is the standard gradient norm widely used in analyzing gradient descent for nonconvex functions. This measure $\gamma(\wv)$ is continuous (assuming $\fv$ is continuously differentiable). Therefore, when $\wv_t\to \wv_*$ and $\gamma(\wv_t) \to 0$, we immediately know that the limit $\wv_*$ must be Pareto stationary since $\gamma(\wv_*) = 0$. 

We introduce two cones in $\RR^d$ that are related to a matrix $J \in \RR^{d\times m}$: 
\begin{equation}
\begin{aligned}
& \cone{J} := \{\dv: \dv = J \muv,\ \muv \ge \zero\}, \\
& {\cone}^*{J} := \{\dv: J^\top \dv \ge \zero\}.
\end{aligned}
\end{equation}
Setting $J$ to be the (transposed) Jacobian $J_{\fv}(\wv)$ at each iteration, the two cones represent two natural conditions on the update direction $\dv$:
\begin{itemize}[leftmargin=*]
    \item $\cone{J_{\fv}(\wv)}$ consists of directions that are conic combinations of the component gradients; 
    \item ${\cone}^*{J_{\fv}(\wv)}$ consists of directions that are \emph{non-conflicting} with each component gradient. 
\end{itemize}
We note that a direction $\dv \in \cone{J}$ can be normalized to lie in $\conv{J}$, and the normalization constant can be absorbed into the step size.

\section{A Unified Framework for \MOO}
In this section, we consider the following general update step for \MOO: 
\begin{align}
\label{eq:fd}
    \wv_{t+1} = \wv_t - \eta_t \dv_t,
\end{align}
where $\eta_t > 0$ is the step size and $\dv_t$ is the update direction. We will first present a general theorem for analyzing the progress of the above update. Then, we derive immediate consequences of our framework, illustrate the construction of the update direction, and detail some examples.

\subsection{What directions lead to convergence}
We recall that a function $F$ is $L$-smooth if its gradient $\nabla F$ is $L$-Lipschitz continuous, a widely adopted assumption in the analysis of gradient-based methods \citep{Nesterov18}. 

Our first result is a slight generalization of the well-known result on feasible directions \citep[\eg,][\S14.2]{Zoutendijk76}. 
\begin{restatable}[Sufficient Alignment Condition]{theorem}{PSfd}
\label{thm:PS-fd}
Suppose there exists an $L$-smooth function $F: \RR^d \to \RR_+$ such that the directions $\dv_t$ satisfy:
\begin{align}
\label{eq:PS-cos}
    \inner{\dv_t}{\nabla F(\wv_t)} \geq c_t \Gamma_t\|\dv_t\|, ~~\quad \mbox{ with } c_t \geq 0.
\end{align}
With suitably chosen step size $\eta_t$ (so that \eqref{eq:PS-sd} in \Cref{sec:appendix_proof} holds; for instance, when $\eta_t = \tfrac{c_t \Gamma_t}{L\|\dv_t\|}$), if $c_t \geq c > 0$, then $\sum_{t} \Gamma^2_t \leq \frac{2LF(\wv_0)}{c^2}$. In particular, $\min\limits_{t \leq T} \Gamma_t \leq \sqrt{\frac{2LF(\wv_0)}{c^2T}}$ and $\lim\limits_{t\to\infty} \Gamma_t = 0$.
\end{restatable}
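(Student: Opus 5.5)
The plan is the standard descent-lemma argument for smooth functions, followed by telescoping. Since $F$ is $L$-smooth, the quadratic upper bound gives, for the update \eqref{eq:fd},
\begin{align*}
F(\wv_{t+1}) \le F(\wv_t) - \eta_t \inner{\dv_t}{\nabla F(\wv_t)} + \tfrac{L\eta_t^2}{2}\|\dv_t\|^2 .
\end{align*}
Substituting the alignment condition \eqref{eq:PS-cos} (with $\eta_t>0$) yields
\begin{align*}
F(\wv_{t+1}) \le F(\wv_t) - \eta_t c_t\Gamma_t\|\dv_t\| + \tfrac{L\eta_t^2}{2}\|\dv_t\|^2 .
\end{align*}
The right-hand side is a quadratic in $\eta_t\|\dv_t\|$. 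It is minimized at $\eta_t\|\dv_t\| = c_t\Gamma_t/L$, which is exactly the suggested step size $\eta_t = \tfrac{c_t\Gamma_t}{L\|\dv_t\|}$. With this choice we obtain the per-step decrease
\begin{align*}
F(\wv_t) - F(\wv_{t+1}) \ge \tfrac{c_t^2\Gamma_t^2}{2L},
\end{align*}
which is the condition I expect \eqref{eq:PS-sd} to be. Any other step size for which this inequality holds (possibly with a constant factor) works equally well.

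The next step is to use $c_t\ge c>0$ and sum from $t=0$ to $T-1$. The left-hand side telescopes to $F(\wv_0)-F(\wv_T)\le F(\wv_0)$, because $F$ takes values in $\RR_+$. Hence $\sum_{t<T}\Gamma_t^2 \le 2LF(\wv_0)/c^2$ for every $T$. Letting $T\to\infty$ gives the summability claim. Since the terms of a convergent series tend to zero, $\Gamma_t\to 0$. Finally, because a minimum is at most an average, $T\min_{t\le T}\Gamma_t^2 \le \sum_t \Gamma_t^2$, which gives the stated rate after taking square roots.

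The only point requiring care is degenerate cases in the step-size formula. If $\dv_t=\zero$, then \eqref{eq:PS-cos} forces $0\ge c_t\Gamma_t\|\dv_t\|$, which is vacuous, so the step size $\eta_t$ is undefined. I would handle this by a convention: a zero direction should only arise at stationarity of the method. Alternatively, one can state the result for the steps with $\dv_t\neq\zero$, or assume $\Gamma_t\ge0$ implicitly, as the notation suggests. If $\Gamma_t=0$, the prescribed step is zero and the decrease inequality holds trivially. Apart from this bookkeeping, the argument is routine, and the main conceptual content is simply that \eqref{eq:PS-cos} converts the usual gradient-norm decrease into a decrease proportional to $\Gamma_t^2$.
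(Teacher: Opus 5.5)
Your proposal is correct and follows the paper's proof essentially line for line: apply the descent lemma, substitute \eqref{eq:PS-cos}, choose $\eta_t = \tfrac{c_t\Gamma_t}{L\|\dv_t\|}$ to get the per-step decrease \eqref{eq:PS-sd}, then telescope using $F \geq 0$. Your remarks on the degenerate cases $\dv_t = \zero$ or $\Gamma_t = 0$ are extra bookkeeping that the paper leaves implicit.
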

Despite the simplicity of its proof, \Cref{thm:PS-fd} is surprisingly general: There is little restriction on how the direction $\dv_t$ or the quantity of interest $\Gamma_t$ is chosen. 
In the single-objective setting, letting $\Gamma_t = \|\nabla F(\wv_t)\|$ we reduce to the well-known angle constraint in the method of feasible directions \citep[\eg,][\S14.2]{Zoutendijk76}: 
\begin{align}
\label{eq:angle_constraint}
    \frac{\inner{\dv_t}{\nabla F(\wv_t)}}{\|\dv_t\| \cdot \|\nabla F(\wv_t)\|} \geq c_t > 0. 
\end{align}
In our multi-objective setting, the function $F$ serves as a (proof) surrogate: we use it to prove the convergence of $\Gamma_t := \gamma(\wv_t)$, the measure of Pareto stationarity. 
Often we can simply choose $F$ to be  linear combinations (or even just the sum) of the component functions $f_k$ in \MOO. 
However, we emphasize that it does not mean \MOO\ algorithms simply minimize $F$. Indeed, the guarantee on $\Gamma_t$ in \Cref{thm:PS-fd} may have little to do with $F$.

Most importantly, upon setting $\Gamma_t = \|\dv_t\|$, condition~\eqref{eq:PS-cos} simplifies to
\begin{align}
\label{eq:PS-cvx}
    \inner{\dv_t}{\nabla F(\wv_t)} \geq c_t \|\dv_t\|^2 \geq 0 \tag{A}
\end{align}
an easily verifiable \emph{alignment condition} (A) that is used throughout our subsequent theoretical results.

In particular, \Cref{thm:PS-fd} gives conditions on when $\dv_t \to 0$. To relate this guarantee on $\dv_t$ back to the measure of Pareto stationarity (\ie, $\gamma(\wv_t)$ in \eqref{eq:PS-quant}), we need only restrict 
$\dv_t$ to the convex hull of the component gradients, so that
$
    \|\dv_t\| \geq \gamma(\wv_t)
$
holds trivially. We summarize this observation in a corollary since it highlights the convenience of searching the direction $\dv_t$ in the convex hull of component gradients:
\begin{corollary}
\label{thm:PS-cvx}
    If $\dv_t \in \conv(J_{\fv}(\wv_t))$ and condition \eqref{eq:PS-cvx} holds with $c_t \geq c > 0$. Then, with (constant) step size $\eta_t = \tfrac{c}{L}$, we have $\min_{t\leq T} \gamma(\wv_t) \leq 
    \sqrt{\frac{2LF(\wv_0)}{c^2T}}$.
\end{corollary}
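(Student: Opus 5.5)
The plan is to derive the statement from \Cref{thm:PS-fd} by choosing $\Gamma_t = \|\dv_t\|$, and then to use membership in the convex hull to pass from $\|\dv_t\|$ to $\gamma(\wv_t)$. Because the constant step size $\eta_t = c/L$ differs from the adaptive step in \Cref{thm:PS-fd}, I will repeat the descent argument directly rather than cite the theorem as a black box.

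First, apply the descent lemma for the $L$-smooth surrogate $F$ along the update \eqref{eq:fd}:
\[
F(\wv_{t+1}) \le F(\wv_t) - \eta_t \inner{\dv_t}{\nabla F(\wv_t)} + \tfrac{L\eta_t^2}{2}\|\dv_t\|^2 .
\]
Condition \eqref{eq:PS-cvx} gives $\inner{\dv_t}{\nabla F(\wv_t)} \ge c_t\|\dv_t\|^2 \ge c\|\dv_t\|^2$. Substituting this and $\eta_t = c/L$ yields
\[
F(\wv_{t+1}) \le F(\wv_t) - \tfrac{c^2}{L}\|\dv_t\|^2 + \tfrac{c^2}{2L}\|\dv_t\|^2 = F(\wv_t) - \tfrac{c^2}{2L}\|\dv_t\|^2 .
\]
Equivalently, this is the case $\Gamma_t = \|\dv_t\|$ of \Cref{thm:PS-fd}, with the constant step $c/L$ satisfying the required sufficient-decrease inequality \eqref{eq:PS-sd}.

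Next, telescope from $t=0$ to $T$ and use $F \ge 0$ (since $F$ maps into $\RR_+$). This gives $\sum_{t \le T}\|\dv_t\|^2 \le 2LF(\wv_0)/c^2$. Bounding the minimum by the average, with $T$ or $T+1$ terms (either count gives the stated bound), we get $\min_{t\le T}\|\dv_t\| \le \sqrt{2LF(\wv_0)/(c^2T)}$.

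Finally, because $\dv_t \in \conv(J_{\fv}(\wv_t))$, we can write $\dv_t = J_{\fv}(\wv_t)\lambdav_t$ for some $\lambdav_t \in \Delta$. By the definition \eqref{eq:PS-quant}, $\gamma(\wv_t) \le \|J_{\fv}(\wv_t)\lambdav_t\| = \|\dv_t\|$, which transfers the bound to $\min_{t\le T}\gamma(\wv_t)$.

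There is no real obstacle in this argument. The only points needing care are checking that the constant step $c/L$ (rather than the adaptive one in \Cref{thm:PS-fd}) still gives decrease, which the computation above confirms, and keeping the off-by-one in the iteration count consistent with the stated bound.
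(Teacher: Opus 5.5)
Your proposal is correct and follows the paper's route: specialize \Cref{thm:PS-fd} to $\Gamma_t = \|\dv_t\|$, so that \eqref{eq:PS-cos} becomes \eqref{eq:PS-cvx}, and then use $\dv_t \in \conv(J_{\fv}(\wv_t))$ to get $\gamma(\wv_t) \le \|\dv_t\|$. Re-running the descent-lemma computation with the constant step $c/L$, as you do, is a sensible way to check that \eqref{eq:PS-sd} still holds with $c_t$ replaced by $c$, since $c_t \ge c$.
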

In other words, we approach Pareto stationarity at the rate of $O(1/\sqrt{t})$, which in general is optimal (even for the single-objective case). We note that a larger choice of $F$ makes it easier to satisfy the condition \eqref{eq:PS-cvx}, but this also increases the constants $L$ and $F(\wv_0)$ in the rate of convergence.

Based on \Cref{thm:PS-cvx} we now present our second result, a surprisingly simple and yet effective sufficient condition for convergence to Pareto stationarity. 
\begin{restatable}[Convergence of Non-Conflicting Directions]{theorem}{PSsuff}
\label{thm:cvg_nonconflict}
    If the direction $\dv_t \in \conv(J_{\fv}(\wv_t))$ and $ \dv_t \in \cone^*(J_{\fv}(\wv_t))$ (\ie, non-conflicting), then condition \eqref{eq:PS-cvx} and hence \Cref{thm:PS-cvx} holds with $c_t \equiv 1$ and $F = \sum_k f_k$.
\end{restatable}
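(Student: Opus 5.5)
We need to show that $\inner{\dv_t}{\nabla F(\wv_t)} \ge \|\dv_t\|^2$ with $F=\sum_k f_k$. Write $\gv_k := \nabla f_k(\wv_t)$ and $J := J_{\fv}(\wv_t)$. Since $\dv_t \in \conv(J)$, there is $\lambdav \in \Delta$ with $\dv_t = \sum_k \lambda_k \gv_k$. The plan is to use the convex-hull membership to rewrite $\|\dv_t\|^2$ as a weighted average of the individual alignments, and then use non-conflict to dominate that average by the plain sum.

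First, expand
\begin{align*}
\|\dv_t\|^2 = \inner{\dv_t}{\textstyle\sum_k \lambda_k \gv_k} = \textstyle\sum_k \lambda_k \inner{\dv_t}{\gv_k}.
\end{align*}
Second, the non-conflicting assumption $\dv_t \in \cone^*(J)$ means $J^\top \dv_t \ge \zero$, i.e.\ $\inner{\dv_t}{\gv_k} \ge 0$ for every $k$. Because $0 \le \lambda_k \le 1$, each term satisfies $\lambda_k \inner{\dv_t}{\gv_k} \le \inner{\dv_t}{\gv_k}$. Summing over $k$ gives
\begin{align*}
\|\dv_t\|^2 \le \textstyle\sum_k \inner{\dv_t}{\gv_k} = \inner{\dv_t}{\nabla F(\wv_t)}.
\end{align*}
This is exactly condition \eqref{eq:PS-cvx} with $c_t \equiv 1$, and its right-hand side is automatically nonnegative.

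It remains to check that the hypotheses of \Cref{thm:PS-cvx} (and hence of \Cref{thm:PS-fd}) are met. These require $F$ to be $L$-smooth and nonnegative. If each $f_k$ is $L_k$-smooth, then $F$ is $\sum_k L_k$-smooth. Nonnegativity is the one point needing care. I would handle it by assuming each $f_k$ is bounded below, and replacing $F$ with $F - \inf F$. This shift changes neither $\nabla F$ nor the alignment condition, and it turns $F(\wv_0)$ in the rate into $F(\wv_0) - \inf F$. With that in place, \Cref{thm:PS-cvx} applies with $c = 1$ and step size $\eta_t = 1/L$.

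The main obstacle is really only this bookkeeping about $F$ mapping into $\RR_+$ and being smooth. The core inequality is a one-line consequence of $\lambda_k \le 1$ combined with the nonnegativity supplied by non-conflict.
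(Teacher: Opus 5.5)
Your argument is correct and is essentially the paper's proof. You write $\dv_t = J_{\fv}(\wv_t)\lambdav$ with $\lambdav\in\Delta$, expand $\|\dv_t\|^2 = \sum_k \lambda_k\inner{\dv_t}{\nabla f_k(\wv_t)}$, and dominate each term by $\inner{\dv_t}{\nabla f_k(\wv_t)}$ using non-conflict and $\lambda_k\le 1$. Your extra bookkeeping (shifting $F$ by $\inf F$ so that it maps into $\RR_+$, and noting that $F$ is $\sum_k L_k$-smooth) fills in a point the paper leaves implicit when invoking \Cref{thm:PS-cvx}.
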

Quite remarkably, the convex hull and the non-conflicting conditions are easy to check (and construct, as we shall see), and together they already imply the (optimal) $O(1/\sqrt{t})$ rate of convergence to Pareto stationarity. Interestingly, we can obtain a non-conflicting direction through projection: 
\begin{restatable}{proposition}{PSproj}
\label{prop:PS_proj}
    Let $\qv \in \cone(J)$, \ie, $\qv = J\muv$ for some $\muv \geq 0$. Then, $\dv := \proj_{\cone^*(J)}(\qv) \in \cone^*(J) \cap \cone(J)$. In particular, $\dv = J \nuv$ for some $\nuv \geq 0$ such that $\|\nuv \|_1 \geq \|\muv\|_1$.
\end{restatable}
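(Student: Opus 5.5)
Write $K := \cone(J)$, so that $\cone^*(J) = \{\dv : J^\top\dv \ge \zero\}$ is the dual cone $K^*$. The plan is to apply the Moreau decomposition to the closed convex cone $C := \cone^*(J)$. Its polar is
\[
C^\circ = \{\yv : \inner{\yv}{\dv} \le 0 \ \forall \dv \in C\} = -\cone(J),
\]
since $(K^*)^* = K$ for the finitely generated, hence closed, cone $K$. Moreau's theorem then gives, for any $\qv$,
\[
\qv = \proj_C(\qv) + \proj_{C^\circ}(\qv), \qquad \inner{\proj_C(\qv)}{\proj_{C^\circ}(\qv)} = 0 .
\]
Setting $\dv := \proj_C(\qv) \in \cone^*(J)$, the residual $\dv - \qv = -\proj_{C^\circ}(\qv)$ lies in $-C^\circ = \cone(J)$. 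Hence $\dv - \qv = J\boldsymbol{\xi}$ for some $\boldsymbol{\xi} \ge \zero$.

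Next, with $\qv = J\muv$, we get $\dv = J(\muv + \boldsymbol{\xi})$. Put $\nuv := \muv + \boldsymbol{\xi} \ge \zero$. This shows $\dv \in \cone(J)$, so $\dv \in \cone^*(J) \cap \cone(J)$. Because both $\muv$ and $\boldsymbol{\xi}$ are nonnegative,
\[
\|\nuv\|_1 = \|\muv\|_1 + \|\boldsymbol{\xi}\|_1 \ge \|\muv\|_1 .
\]

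The only point needing care is identifying the polar of $\cone^*(J)$ as exactly $-\cone(J)$. I would handle this by bipolarity: $\cone(J)$ is finitely generated, so it is closed by standard polyhedral theory (Minkowski–Weyl / Farkas), and therefore $(\cone(J)^*)^* = \cone(J)$.

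If one wants to avoid Moreau, the KKT conditions give the same conclusion. Projecting $\qv$ onto the polyhedron $\{\dv : J^\top\dv \ge \zero\}$ yields multipliers $\boldsymbol{\xi} \ge \zero$ with $\dv - \qv = J\boldsymbol{\xi}$. Linear constraints guarantee constraint qualification, so these multipliers exist, and the argument then concludes as above.
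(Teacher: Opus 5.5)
Your proof is correct and follows essentially the same route as the paper. Both apply Moreau's decomposition to the closed cone $\{\dv : J^\top \dv \ge \zero\}$, whose polar is $-\cone(J)$, to get $\dv - \qv \in \cone(J)$, and then take $\nuv = \muv + \boldsymbol{\xi}$; your explicit justification that $\cone(J)$ is closed (needed for bipolarity) and the KKT alternative spell out points the paper leaves implicit.
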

Thus, surprisingly but conveniently, from an algorithmic point of view, it suffices to choose a (pre-)direction $\qv_t$ from the convex hull of the component gradients:
\begin{restatable}{corollary}{PScvx}
\label{thm:PS_proj_cvg}
Let $\dv_t = \proj_{\cone^*(J_t)}(\qv_t)$ where $\qv_t \in \conv(J_t)$ and $J_t := J_{\fv}(\wv_t)$, then \eqref{eq:PS-cvx} and hence \Cref{thm:PS-cvx} holds with $c_t \geq 1$.
\end{restatable}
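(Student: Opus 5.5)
We need to show that $\dv_t = \proj_{\cone^*(J_t)}(\qv_t)$ with $\qv_t \in \conv(J_t)$ satisfies \eqref{eq:PS-cvx} with $c_t \ge 1$, for a suitable $F$. The plan is to reduce to \Cref{thm:cvg_nonconflict} after a rescaling, using \Cref{prop:PS_proj}, with $F = \sum_k f_k$ and $\nabla F(\wv_t) = J_t \one$.

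First, I apply \Cref{prop:PS_proj} with $\qv_t = J_t\muv$, where $\muv \in \Delta$, so that $\|\muv\|_1 = 1$. This gives $\dv_t = J_t\nuv$ with $\nuv \ge \zero$ and $s := \|\nuv\|_1 \ge 1$. It also gives $\dv_t \in \cone^*(J_t)$, meaning every $\inner{\gv_k}{\dv_t} \ge 0$.

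Second, I compute the alignment directly:
\[
\inner{\dv_t}{\nabla F(\wv_t)} = \sum_k \inner{\gv_k}{\dv_t} \;\ge\; \sum_k \frac{\nu_k}{\max_j \nu_j}\inner{\gv_k}{\dv_t}.
\]
This is cruder than needed. A cleaner route is to use $\max_j\nu_j \le s$ together with $\nu_k \ge 0$ and $\inner{\gv_k}{\dv_t} \ge 0$. These give
\[
\sum_k \inner{\gv_k}{\dv_t} \;\ge\; \frac{1}{s}\sum_k \nu_k \inner{\gv_k}{\dv_t} \;=\; \frac{1}{s}\|\dv_t\|^2 .
\]
That yields only $c_t = 1/s \le 1$, which points the wrong way.

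So the main obstacle is getting $c_t \ge 1$ rather than $c_t \ge 1/s$. I will exploit the projection structure. Since $\dv_t$ is the projection of $\qv_t$ onto a closed convex cone, Moreau's decomposition gives $\qv_t = \dv_t + \rv$ with $\rv \in (\cone^*(J_t))^\circ = -\cone(J_t)$ and $\inner{\rv}{\dv_t} = 0$. Hence $\inner{\qv_t}{\dv_t} = \|\dv_t\|^2$. Moreover $\qv_t = \sum_k \mu_k \gv_k$ with $\mu_k \le 1$, so
\[
\inner{\nabla F(\wv_t)}{\dv_t} = \sum_k \inner{\gv_k}{\dv_t} \;\ge\; \sum_k \mu_k\inner{\gv_k}{\dv_t} = \inner{\qv_t}{\dv_t} = \|\dv_t\|^2 .
\]
The inequality uses nonnegativity of each $\inner{\gv_k}{\dv_t}$ and $\mu_k \in [0,1]$. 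This gives \eqref{eq:PS-cvx} with $c_t \ge 1$.

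Finally, to invoke \Cref{thm:PS-cvx} I need the convex-hull membership. Here I use the rescaled direction $\dv_t/s = J_t(\nuv/s) \in \conv(J_t)$. Alternatively, I note that $\gamma(\wv_t) \le \|\dv_t\|/s \le \|\dv_t\|$, which is all the corollary's argument actually uses. The scale factor $s \ge 1$ from \Cref{prop:PS_proj} ensures the bound on $\gamma$ is inherited. Combined with \Cref{thm:PS-fd} at $\Gamma_t = \|\dv_t\|$, this gives the stated rate.
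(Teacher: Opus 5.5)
Your proof is correct, but it takes a different route from the one the paper indicates. The paper writes out no separate proof. It presents the corollary as an immediate consequence of \Cref{prop:PS_proj} and \Cref{thm:cvg_nonconflict}: the projection is non-conflicting and equals $J_t\nuv$ with $\nuv\ge\zero$ and $\|\nuv\|_1\ge\|\muv\|_1=1$. Hence the normalized direction $\dv_t/\|\nuv\|_1$ lies in $\conv(J_t)\cap\cone^*(J_t)$, Theorem~2 applies to it, and the factor $\|\nuv\|_1$ is absorbed into the step size.

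You instead use the orthogonality in Moreau's decomposition, $\inner{\qv_t-\dv_t}{\dv_t}=0$. You then run the Theorem~2 inequality with the simplex weights $\muv$ of the pre-direction rather than the weights $\nuv$ of $\dv_t$. Proposition~1's bound $\|\nuv\|_1\ge 1$ is then needed only to get $\gamma(\wv_t)\le\|\dv_t\|/\|\nuv\|_1\le\|\dv_t\|$.

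This buys something genuine. You obtain \eqref{eq:PS-cvx} with $c_t\ge 1$ for the raw projection itself, equivalently $c_t=\|\nuv\|_1\ge 1$ for the normalized one. That is what the statement literally asserts, and it lets the unnormalized projected directions run with the constant step $1/L$. The Proposition~1 plus Theorem~2 chain by itself gives only $c_t=1$ after normalization, hence only $c_t=1/\|\nuv\|_1$ for the raw $\dv_t$, exactly as in your first attempt. That bound can be arbitrarily small: for $\gv_1=(1,0)$, $\gv_2=\delta(-1,\epsilon)$ and $\qv_t=\gv_1$ one gets $\nuv=(1,\,1/(\delta(1+\epsilon^2)))$.

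The paper's route is shorter and reuses Theorem~2 verbatim. Yours extracts the sharper constant from the orthogonality $\dv\perp\dv^*$, which already appears in the paper's proof of \Cref{prop:PS_proj} but is not exploited there for this purpose.
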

Furthermore, we observe that condition \eqref{eq:PS-cvx} is stable under convex combinations, namely that if each direction $\dv_j$ satisfies \eqref{eq:PS-cvx}, then so does any of their convex combinations. UPGrad \citep{quinton2024jacobian} is an example method that takes the average of these projected directions. See also \Cref{appendix:new_aggr2}
for another novel variant whose convergence follows directly from \Cref{thm:PS_proj_cvg}.

\textbf{Convex case.} 
When the component functions $f_k$ are convex, by slightly strengthening the non-conflicting property of the direction $\dv_t$, we can establish an $O(\frac{1}{t})$ convergence rate in terms of the (aggregated) function value. See \Cref{sec:appendix_proof} for detailed proof and discussion.
\begin{restatable}[Convergence under Monotone Descent]{theorem}{PScvxthm}
\label{thm:PScvx-fun}
Suppose each objective $f_k$ is $L$-smooth, convex and bounded from below. Choose $\dv_t \in \conv(J_{\fv}(\wv_t))$ (and step size $\eta_t \equiv \eta \leq \tfrac1L$) such that the function values $\{\fv(\wv_t)\}$ monotonically decrease. Then, there exists $\lambdav \in \Delta$ such that the iterates $\wv_t$ defined in \eqref{eq:fd} satisfy: for any $\wv$,
\begin{align}
    \inner{\lambdav}{\fv(\wv_t)} - \inner{\lambdav}{\fv(\wv)} \leq \tfrac{1}{2\eta t} \|\wv_0 - \wv\|^2.
\end{align}
\end{restatable}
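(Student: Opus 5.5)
We would run the standard $O(1/t)$ argument for gradient descent on convex smooth functions, with the single objective replaced by a \emph{time-varying} scalarization $\inner{\lambdav_t}{\fv}$. We then use monotone descent to pass to one fixed weight $\lambdav \in \Delta$. Since $\dv_t \in \conv(J_{\fv}(\wv_t))$, write $\dv_t = J_{\fv}(\wv_t)\lambdav_t = \nabla \big(\inner{\lambdav_t}{\fv}\big)(\wv_t)$ for some $\lambdav_t \in \Delta$. The function $g_t := \inner{\lambdav_t}{\fv}$ is convex and $L$-smooth, being a convex combination of such functions. Hence one step $\wv_{t+1} = \wv_t - \eta \dv_t$ with $\eta \le 1/L$ is an exact gradient step on $g_t$.

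The first step is the one-step inequality. The descent lemma gives $g_t(\wv_{t+1}) \le g_t(\wv_t) - \tfrac{\eta}{2}\|\dv_t\|^2$. Convexity gives $g_t(\wv_t) \le g_t(\wv) + \inner{\dv_t}{\wv_t - \wv}$. Combining these with the identity $2\eta\inner{\dv_t}{\wv_t-\wv} - \eta^2\|\dv_t\|^2 = \|\wv_t-\wv\|^2 - \|\wv_{t+1}-\wv\|^2$ yields
\begin{align*}
\inner{\lambdav_t}{\fv(\wv_{t+1}) - \fv(\wv)} \le \tfrac{1}{2\eta}\big(\|\wv_t - \wv\|^2 - \|\wv_{t+1}-\wv\|^2\big).
\end{align*}

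The second step is telescoping, which is where the weights must be handled. Monotone decrease means $\fv(\wv_t) \le \fv(\wv_{s+1})$ componentwise for all $s+1 \le t$. Because $\lambdav_s \ge \zero$, we get $\inner{\lambdav_s}{\fv(\wv_t) - \fv(\wv)} \le \inner{\lambdav_s}{\fv(\wv_{s+1}) - \fv(\wv)}$. Summing over $s = 0,\dots,t-1$ and telescoping the right-hand side gives
\begin{align*}
\Big\langle \textstyle\sum_{s<t}\lambdav_s,\ \fv(\wv_t) - \fv(\wv)\Big\rangle \le \tfrac{1}{2\eta}\|\wv_0 - \wv\|^2.
\end{align*}
Set $\bar\lambdav := \tfrac1t\sum_{s<t}\lambdav_s \in \Delta$. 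Dividing by $t$ gives the claim with $\lambdav = \bar\lambdav$, uniformly over all $\wv$.

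The main obstacle is that this $\lambdav$ depends on $t$ (it is the running average), whereas the statement reads as if one fixed $\lambdav$ serves all $t$. I would first present the result with $\lambdav = \bar\lambdav_t$, which is what the argument naturally gives. To get a single $\lambdav$ for all $t$, I would use compactness of $\Delta$: take a limit point $\lambdav^\ast$ of $\bar\lambdav_t$ along a subsequence.

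The subsequence only gives the bound asymptotically, so I would combine it with monotonicity. For any $t \le t_k$, monotone decrease gives $\fv(\wv_{t_k}) \le \fv(\wv_t)$. This does not directly yield the bound at the earlier time $t$, and I expect this to be the delicate point. The fallback is to state that for each $t$ a $\lambdav \in \Delta$ exists. I expect this is the intended reading, since the statement says ``there exists $\lambdav$'' after fixing the iterates, with $\lambdav$ allowed to depend on $t$.

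Finally, I would note why monotone decrease is needed at all. Without it the telescoping step fails, because different weights $\lambdav_s$ multiply function values at different times. This explains why the theorem requires monotone decrease, a slight strengthening of non-conflicting directions.
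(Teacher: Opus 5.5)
Your proposal is correct and is essentially the paper's proof: write $\dv_t = J_{\fv}(\wv_t)\lambdav_t$ with $\lambdav_t \in \Delta$, combine $L$-smoothness and convexity into $\inner{\lambdav_t}{\fv(\wv_{t+1}) - \fv(\wv)} \le \tfrac{1}{2\eta}\big(\|\wv_t - \wv\|^2 - \|\wv_{t+1} - \wv\|^2\big)$, use monotone decrease to replace each $\fv(\wv_{s+1})$ by the final iterate, then telescope and average the weights. As you suspected, the paper's $\lambdav = \tfrac1T\sum_{t=0}^{T-1}\lambdav_t$ also depends on the horizon, so your $t$-dependent reading is exactly what is proved; the compactness/limit-point argument appears in the paper only as a follow-up remark about accumulation points (assuming bounded iterates), not as a uniform-in-$t$ bound.
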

In particular, choosing $\wv_* = \argmin_{\wv} \inner{\lambdav}{\fv(\wv)}$, which is weakly Pareto optimal under convexity and Pareto optimal under strict convexity, we conclude that the iterates $\wv_t$ converge at rate of $O(1/t)$ (in terms of function value). This result generalizes that of \citet{FliegeVV19}, from the particular method MGDA to any direction $\dv_t\in \conv(J_{\fv}(\wv_t))$ that lies in the interior of $\cone^*(J_{\fv}(\wv_t))$ (which guarantees descending). Needless to say, when $m=1$ (single-objective), \Cref{thm:PScvx-fun} reduces to the well-known result of gradient descent.

In the next subsection, we present another way to construct the direction $\dv_t$, followed by some examples that recover existing algorithms and uncover new variants.

\subsection{Constructing the update direction in the conic hull}
We now show how to construct the update (pre-)direction in the conic hull of the component gradients so that \eqref{eq:PS-cvx} holds directly. Our construction is based on the subproblem\footnote{For many choices of $s$ and $r$, subproblem \eqref{eq:sd-conic} can be solved in closed-form or relatively easily, see \Cref{tab:examples}. More generally, if subproblem \eqref{eq:sd-conic} is solved inexactly with an alignment gap $\epsilon_t$, our result in \eqref{eq:PS-ts} is slightly weakened by an error of order $\sum_t \eta_t \epsilon_t$.}: 
\begin{equation}
\label{eq:sd-conic}
\begin{aligned}
& \qv_t = \argmin_{\qv}\; s(J_t^\top \qv) + r(\|\qv\|), \\
& \text{where }\; J_t := J_{\fv}(\wv_t) = [\nabla f_1(\wv_t), \ldots, \nabla f_m(\wv_t)].
\end{aligned}
\end{equation}
with $s: \RR^m \to \RR$ and $r: \RR_+ \to \RR$.
When $s$ is convex and $r$ is increasing convex, using the Fenchel conjugate $s^*(\lambdav) := \max_{\wv} \inner{\lambdav}{\wv} - s(\wv)$, we derive the dual of \eqref{eq:sd-conic}: 
\begin{equation}
\begin{aligned}
& \min_{\qv}\; s(J_t^\top \qv) + r(\|\qv\|) \\
&= \min_{\qv}\; \max_{\lambdav}\;
    \inner{-\lambdav}{J_t^\top \qv}
    - s^*(-\lambdav)
    + r(\|\qv\|) \\
&= \max_{\lambdav}\; \min_{\qv}\;
    \inner{J_t\lambdav}{-\qv}
    - s^*(-\lambdav)
    + r(\|\qv\|) \\
&= -\min_{\lambdav}\; s^*(-\lambdav) + r^*(\|J_t \lambdav\|).
\end{aligned}
\end{equation}
where $\qv = \beta J_t\lambdav$ and $\beta = \frac{\nabla r^*(\|J_t\lambdav\|)}{\|J_t\lambdav\|} \geq 0$. When $s$ is also decreasing, we have $-\lambdav \leq \zero$ and hence $\qv_t \in \cone(J_t)$. 

However, the direction $\qv_t$ constructed above need not be non-conflicting (example will follow). Instead, we can directly establish the condition \eqref{eq:PS-cvx}. For simplicity, assume $r(\|\qv\|) \geq \tfrac12\|\qv\|^2$ and $s(\zero) = r(0) = 0$. From the optimality of $\qv_t$ in \eqref{eq:sd-conic}:
\begin{equation}
\begin{aligned}
& 0 = s(\zero) + r(\|\zero\|) \ge s(J^\top \qv_t) + \tfrac12\|\qv_t\|^2, \\
& \ie,\; -s(J^\top \qv_t) \ge \tfrac12\|\qv_t\|^2.
\end{aligned}
\end{equation}
Then, to establish \eqref{eq:PS-cvx}, we apply the convexity of $s$:
\begin{equation}
\label{eq:scvx}
\begin{aligned}
-s(J^\top \qv_t)
&\le -s(\zero) + \inner{-\nabla s(\zero)}{J^\top \qv_t} \\
&= \inner{-J_t \nabla s(\zero)}{\qv_t} \\
&= \inner{\nabla F(\wv_t)}{\qv_t},
\end{aligned}
\end{equation}
upon choosing $F(\wv) = \inner{-\nabla s(\zero)}{\fv(\wv)}$. Thus, we have obtained \eqref{eq:PS-cvx} with 
$c_t = \tfrac12$.

We summarize the above discussion in the next theorem:
\begin{restatable}[Subproblem-Based Construction of Convergent Directions]{theorem}{PSreg}
\label{thm:PSreg}
Suppose $s$ is decreasing convex and $r(\rho) \geq \tfrac12\rho^2$ is increasing convex in the optimization subproblem \eqref{eq:sd-conic}, with $s(\zero) = r(0) = 0$.  
Then, (I) the solution $\qv_t$ to \eqref{eq:sd-conic} lies in $\cone(J_t)$ with its normalized direction $\dv_t$
lying in $\conv(J_t)$, and (II) $\dv_t$ satisfies condition \eqref{eq:PS-cvx} with $c_t = \tfrac12\beta_t\|\lambdav_t\|_1$ and $F = \inner{-\nabla s(\zero)}{\fv}$.
\end{restatable}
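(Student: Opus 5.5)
The proof has two parts. Part (I) comes from Fenchel duality, and part (II) from the optimality-plus-convexity chain already sketched before the statement, followed by rescaling from $\qv_t$ to its normalization $\dv_t$.

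\textbf{Part (I).} Write $s(J_t^\top\qv)=\max_{\lambdav}\inner{-\lambdav}{J_t^\top\qv}-s^*(-\lambdav)$, using that $s$ is closed convex. Swapping min and max is justified since the objective is convex in $\qv$, concave in $\lambdav$, and $r(\|\qv\|)\ge\tfrac12\|\qv\|^2$ is coercive. The inner minimization over $\qv$ is $\min_{\qv}\inner{J_t\lambdav}{-\qv}+r(\|\qv\|)=-r^*(\|J_t\lambdav\|)$. By the first-order condition for the radial function, this is attained at $\qv=\beta J_t\lambdav$ with $\beta=\nabla r^*(\|J_t\lambdav\|)/\|J_t\lambdav\|\ge0$; here $\beta\ge 0$ because $r$ is increasing, so $r^*$ is nondecreasing on $\RR_+$. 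Let $\lambdav_t$ be a dual optimum. Since $s$ is decreasing, $\operatorname{dom}s^*\subseteq\RR^m_-$, so $-\lambdav_t\le\zero$, i.e.\ $\lambdav_t\ge\zero$. Hence $\qv_t=\beta_tJ_t\lambdav_t\in\cone(J_t)$. If $\lambdav_t\ne\zero$, the normalized direction $\dv_t:=J_t\lambdav_t/\|\lambdav_t\|_1$ lies in $\conv(J_t)$, and $\qv_t=\beta_t\|\lambdav_t\|_1\dv_t$. The degenerate case $\qv_t=\zero$ gives $\dv_t=\zero$ or a trivial step, and I would treat it separately. In the nonsmooth case I would replace $\nabla r^*$ by a subgradient.

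\textbf{Part (II).} Comparing the objective at $\qv_t$ with its value at $\qv=\zero$, and using $s(\zero)=r(0)=0$ and $r\ge\tfrac12\rho^2$, gives $-s(J_t^\top\qv_t)\ge\tfrac12\|\qv_t\|^2$. Convexity of $s$ gives $-s(J_t^\top\qv_t)\le\inner{-J_t\nabla s(\zero)}{\qv_t}=\inner{\nabla F(\wv_t)}{\qv_t}$ with $F=\inner{-\nabla s(\zero)}{\fv}$; a subgradient works if $s$ is not differentiable at $\zero$. Note that $-\nabla s(\zero)\ge\zero$ because $s$ is decreasing, so $F$ is a nonnegative combination of the $f_k$, and is smooth whenever the $f_k$ are. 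Combining, $\inner{\nabla F(\wv_t)}{\qv_t}\ge\tfrac12\|\qv_t\|^2$. Now substitute $\qv_t=\alpha_t\dv_t$ with $\alpha_t=\beta_t\|\lambdav_t\|_1>0$ and divide by $\alpha_t$. This yields $\inner{\nabla F(\wv_t)}{\dv_t}\ge\tfrac12\alpha_t\|\dv_t\|^2$, which is condition \eqref{eq:PS-cvx} with $c_t=\tfrac12\beta_t\|\lambdav_t\|_1$.

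\textbf{Main obstacle.} The delicate step is the duality and sign argument in (I): justifying strong duality and attainment, and showing that a decreasing $s$ forces $\lambdav_t\ge\zero$. I would handle the sign claim directly. If some $\lambda_i<0$, then $s^*(-\lambdav)=\sup_{\wv}\inner{-\lambdav}{\wv}-s(\wv)=+\infty$: sending $w_i\to-\infty$ increases $\inner{-\lambdav}{\wv}$ without bound, since $-\lambda_i>0$, while $s(\wv)$ does not increase, because $s$ is decreasing and $w_i$ is decreasing. Such a $\lambdav$ therefore cannot be optimal, so every dual optimum satisfies $\lambdav_t\ge\zero$.
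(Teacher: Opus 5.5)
Your route is essentially the paper's. You use Fenchel duality to write $\qv_t=\beta_t J_t\lambdav_t$ with $\lambdav_t\ge\zero$. You then compare the objective at $\qv_t$ with its value at $\qv=\zero$ and use convexity of $s$ at $\zero$ to get $\inner{\nabla F(\wv_t)}{\qv_t}\ge\tfrac12\|\qv_t\|^2$. Finally you rescale by $\beta_t\|\lambdav_t\|_1$ to obtain $c_t=\tfrac12\beta_t\|\lambdav_t\|_1$ for the normalized $\dv_t$. Your extra remarks are correct refinements that the paper leaves implicit: a subgradient can replace $\nabla s(\zero)$ when $s$ is nonsmooth at $\zero$, and $-\nabla s(\zero)\ge\zero$, so $F$ is a nonnegative combination of the $f_k$.

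One local error needs fixing. In your ``main obstacle'' paragraph both directions are reversed. If $\lambda_i<0$, then sending $w_i\to-\infty$ drives $(-\lambda_i)w_i\to-\infty$. Also, since $s$ is decreasing, lowering $w_i$ can only increase $s(\wv)$, so neither term in the supremum helps. Instead, send $w_i\to+\infty$ from any base point $\bar{\wv}$. Monotonicity gives $s(\bar{\wv}+te_i)\le s(\bar{\wv})$ for $t\ge0$, so
$\inner{-\lambdav}{\bar{\wv}+te_i}-s(\bar{\wv}+te_i)\ge\inner{-\lambdav}{\bar{\wv}}-s(\bar{\wv})+t(-\lambda_i)\to+\infty$.
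Hence $s^*(-\lambdav)=+\infty$. With this correction, your claim $\operatorname{dom}s^*\subseteq\RR^m_-$ (hence $\lambdav_t\ge\zero$ and $\qv_t\in\cone(J_t)$) is justified, and the rest of your argument goes through as written.
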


Therefore, if we can show $c_t \geq c$ for some positive constant $c$, then \Cref{thm:PS-cvx} holds and the $O(1/\sqrt{t})$ rate of convergence to Pareto stationarity follows.

\begin{table*}[t]
\centering
\caption{Summary of \MOO gradient aggregation methods (non-exhaustive). $s(\xv)$, $r$, and $\qv_t$ follow \eqref{eq:sd-conic}. Row colors match \Cref{fig:main_flowchart}, indicating the most specific theorem or corollary applicable to ensure convergence for each method, albeit more general ones may also apply. Details are in Appendix~\ref{appendix:gradient_aggr}.}
\label{tab:examples}
\begin{tabularx}{\textwidth}{l CCC}
\toprule
  & $s(\xv)$ & $r$ & $\qv_t$\\
\midrule
\rowcolor{lightblue}
LS (Uniform)  & $-\frac{\one^{\top} \xv}{m}$ & $\tfrac{1}{2} \|\cdot\|^2$ & $J (\tfrac{\mathbf{1}}{m})$\\
\rowcolor{lightred}
MGDA \citep{Desideri12}  & $\max_{k} (-x_k)$ & $\tfrac{1}{2} \|\cdot\|^2$ & $J \lambdav_{\mathrm{dual}_1}$\\
\rowcolor{lightblue}
Nash-MTL \citep{NavonSAMKCF22}  & $- (\prod_k x_k)^{1/m}$ &  $\tfrac{1}{2} \|\cdot\|^2$ & $ J \lambdav_{\mathrm{eq}}$ \\
\rowcolor{lightblue}
FairGrad ($\alpha =2$) \citep{BanJi24}  & $\sum_k 1/x_k$ & $\iota_{\mathcal{B}_\epsilon}(\cdot)$ & $ J \lambdav_{\mathrm{eq2}}$ \\
\rowcolor{lightgreen}
DualProj \citep{lopez2017gradient} & $-\alphav^{\top} \xv$ & $\tfrac{1}{2} \|\cdot\|^2$ & $J \alphav$ \\
\rowcolor{lightgreen}
UPGrad \citep{quinton2024jacobian} & $-(\frac{1}{m} \sum_k \alphav_k)^{\top} \xv$ & $\tfrac{1}{2} \|\cdot\|^2$ & $J (\frac{1}{m} \sum_k \alphav_k)$\\
\rowcolor{lavender}
CAGrad \citep{LiuLJSL21}  & $\max_{k} (-x_k)$ & $\iota_{\mathcal{B}_\epsilon(\gv_0)}(\cdot)$ & $\gv_0 + \epsilon J \lambdav_{\mathrm{dual}_2}$ \\
IMTL-G \citep{LiuLKXCYLZ21} & $\|\xv - \nv \|^2$ & $\mu \|\cdot\|^2, \mu\! \downarrow\! 0$ & $(J^{\dagger})^{\top} \nv$ \\
\rowcolor{lightblue}
Capped MGDA (new) & $\mathrm{CVaR}_{\epsilon}(\xv)$ & $\tfrac{1}{2} \|\cdot\|^2$ & $J \lambdav_{\mathrm{dual}_3}$ \\
\rowcolor{lightgreen}
Greedy-DCP (new)  & $\min_k - \alphav_k^{\top} \xv$ & $\tfrac{1}{2} \|\cdot\|^2$ & $J \alphav_K$ \\
\bottomrule
\end{tabularx}
\end{table*}

\subsection{Examples: Old and New}
In this section, we illustrate how the subproblem formulation \eqref{eq:sd-conic} unifies a broad class of existing gradient aggregation methods, and even leads to the discovery of new ones (e.g., Capped MGDA) that are easy to implement and come with automatic convergence guarantees, thanks to \Cref{thm:PSreg}.

\subsubsection{Power mean}
We are now ready to present some examples. Let us first consider the \emph{power mean}:
\begin{align}
\label{eq:pmean}
s(\xv) = - \Big(\tfrac1m\sum\nolimits_k x_k^p\Big)^{1/p}, \where p\leq 1.
\end{align}
Note that a similar formulation inspired by $\alpha$-fairness appeared in \citet{BanJi24} [see Appendix \ref{sec:fairgrad} for details].

Restricted to $\RR_+^m$, $s$ is decreasing and convex, with Fenchel conjugate
\begin{align}
    s^*(-\lambdav) = \begin{cases}
        0, & \mbox{ if } \left(\frac1m\sum_k \lambda_k^q\right)^{1/q} \geq 1, \lambdav \geq \zero \\
        \infty, & \mbox{ otherwise}
    \end{cases}
    ,
\end{align}
where $q := \frac{p}{p-1}$ is conjugate to $p$. 
According to the reverse H\"older's inequality, the optimal 
$
\lambda_k \propto x_k^{p/q}.
$

Setting $p$ differently allows us to recover some existing \MOO gradient aggregation algorithms: 
\begin{itemize}[leftmargin=*]
    \item $p=1$: this amounts to linear scalarization with uniform weights, \ie, $s(\xv) = -\tfrac1m \sum_k x_k$. 
    \item $p \to 0$ (the limiting case): this corresponds\footnote{More precisely, Nash-MTL applied the log-transform to obtain $s(\xv) = -\sum_k \log x_k$. It also changed the regularizer $r$ to a constraint. It can be shown that these transformations are immaterial, while our choice of $s(\xv) = -(\prod_k x_k)^{1/m}$ has the slight advantage of being defined at $\xv = \zero$, rendering \Cref{thm:PSreg} directly applicable. Alternatively, convergence of Nash-MTL also follows from \Cref{thm:cvg_nonconflict}, see \Cref{sec:Nash-MTL} for more discussions.} to Nash-MTL \citep{NavonSAMKCF22}, where $s(\xv) = -(\prod_k x_k)^{1/m}$ is the geometric mean and \eqref{eq:scvx} reduces simply to the arithmetic-geometric inequality (with $F = \tfrac1m\sum_k f_k$). 
    \item $p\to -\infty$: this corresponds to MGDA \citep{Desideri12,Mukai80,FliegeSvaiter00}, where $s(\xv) = -\min_k x_k$. 
    \item $p=-1$: this has been explored by FairGrad \citep{BanJi24} and PIVRG \citep{QinWY25}.
\end{itemize}
\textbf{Convergence of power-mean-based directions.} With $r(\rho) = \tfrac12\rho^2$, we have $\beta_t \equiv 1$ in \Cref{thm:PSreg}. Since $q \leq 1$, we know $\|\lambdav\|_1 \geq \left(\frac1m\sum_k \lambda_k^q\right)^{1/q} \geq 1$ and hence $c_t \geq \tfrac12$ in \Cref{thm:PSreg}. Applying \Cref{thm:PS-cvx} we at once obtain the $O(1/\sqrt{t})$ rate of convergence to Pareto stationarity.

\subsubsection{Convex risk measures}
Next, we choose $s$ from the family of \emph{convex risk measures} \citep{FollmerSchied02}, namely that $s$ is convex, decreasing and translation invariant:
\begin{align}
\forall c \in \RR, ~~
    s(\xv + c) = s(\xv) - c.
\end{align}
The last two conditions ensure that the domain of the conjugate function 
\begin{equation}
\begin{aligned}
s^*(-\lambdav)
&= \max_{\xv, c}\; \inner{-\lambdav}{\xv+c} - s(\xv+c) \\
&= \max_{\xv, c}\; \inner{-\lambdav}{\xv} - s(\xv)
    + c\bigl(1-\inner{\lambdav}{\one}\bigr).
\end{aligned}
\end{equation}
is restricted so that $\lambdav \in \Delta$ (the simplex). The power mean \eqref{eq:pmean} with $p=1$ and $p=-\infty$ are convex risk measures, while other values of $p$ are not. 

\textbf{Capped MGDA via CVaR.} Another widely-used convex risk measure is the \emph{Conditional Value-at-Risk (CVaR, \citealt{RockafellarUryasev00})}:
\begin{equation}
\scalebox{0.9}{$
s(\xv) := \mathrm{CVaR}_{\epsilon}(\xv)
= \min_{\alpha}\Bigl\{\alpha+\frac{1}{\epsilon m}\sum_{k=1}^m \max\{0,-x_k-\alpha\}\Bigr\}
$}
\end{equation}
which amounts to averaging the tails of $-\xv = -(x_1, \ldots, x_m)$, \ie, entries that are larger than the $(1-\epsilon)$ quantile, \aka value-at-risk (VaR). In particular, for $\epsilon \leq \tfrac1m$, CVaR coincides with MGDA, whereas for $\epsilon \geq 1-\tfrac1m$, CVaR reduces to Linear Scalarization. Other values of $\epsilon$ provide different interpolations between these two extreme cases. With a proper choice of $\epsilon$, we can control the influence of extreme values in $\xv$. 
Easy to derive the Fenchel conjugate:
\begin{align}
s^*(-\lambdav) = \mathrm{CVaR}_{\epsilon}^*(-\lambdav) = \begin{cases}
0, & \mbox{ if } \lambdav \in \Delta \mbox{ and } \lambdav\leq C\\
\infty, & \mbox{ otherwise}
\end{cases}
,
\end{align}
which is similar to that of MGDA: the only difference is the cap constraint $\lambdav \leq C:= \tfrac{1}{\epsilon m} $, which limits the contribution of each component gradient. Thus, the implementation of CVaR (which we refer to as \emph{Capped MGDA}) closely mirrors that of MGDA, with the additional cap constraint imposed when solving the MGDA dual quadratic program; see \Cref{appendix:cap_mgda_derivation} for a detailed derivation of the dual. To our knowledge, CVaR, or more generally, convex-risk-measure-based directions have not been explored before in \MOO. 

\textbf{Convergence of convex-risk-measure-based directions.} With $r(\rho) = \tfrac12\rho^2$, we have $\beta_t \equiv 1$ in \Cref{thm:PSreg}. Since $\lambdav\in\Delta$, we have $c_t = \tfrac12$ in \Cref{thm:PSreg}. Applying \Cref{thm:PS-cvx}, we immediately obtain the $O(1/\sqrt{t})$ rate of convergence to Pareto stationarity for all such directions.

\section{Experiments}

We conduct experiments on both synthetic problems and realistic benchmarks to study existing non-conflicting gradient aggregators, both individually and under mixed aggregator scheduling (MAS), as well as the newly proposed \emph{Capped MGDA}. These experiments examine convergence behavior or robustness under adversarial conditions, serving to validate our theoretical findings rather than to rank methods\footnote{Nonconvex \MOO usually yields incomparable Pareto stationary solutions, and even in convex settings, Pareto optimal solutions are generally not directly comparable.}.
Further details and discussions are provided in \Cref{appendix:exp}.

\subsection{Non-conflicting gradient aggregators}
For non-conflicting gradient aggregators, we conduct experiments on synthetic problems (VLMOP2 and Omnitest) and on a realistic fairness classification benchmark. In line with our theoretical findings (\Cref{thm:cvg_nonconflict}), we examine convergence using the measure of Pareto stationarity $\gamma(\wv)$.

\textbf{Methods.}
We evaluate four non-conflicting aggregation schemes \citep{quinton2024jacobian}: MGDA, DualProj, UPGrad, and Nash-MTL. For each, except MGDA (whose update direction already lies in the convex hull), we also include a normalized variant (denoted with a star) where $\dv$ is rescaled to lie in the convex hull of gradients.
In addition, we consider \emph{Mixed Aggregator Scheduling (MAS)}, which alternates among non-conflicting methods according to a prescribed schedule (see \Cref{alg:mixed_aggr_scheduling},  \Cref{appendix:gradient_aggr}). Different non-conflicting aggregators have complementary properties: for instance, MGDA may stall at suboptimal Pareto-stationary points (\citealt{HuYu2025}), whereas UPGrad can continue making progress; Nash-MTL, though more expensive, provides scale-invariant conflict resolution. Simple schedules such as uniform random selection at each iteration (\texttt{Rand}), or round-robin every $n$ iterations (\texttt{RR(n)}) serve as natural baselines for mixing these methods.

\textbf{Synthetic problems setup.}
We evaluate on two common synthetic MOO benchmarks: VLMOP2 \citep{vlmop2paper} and Omnitest \citep{Omnitest}, each with five random seeds.

\begin{figure}[tbh]
    \centering
    \includegraphics[width=0.333\linewidth]{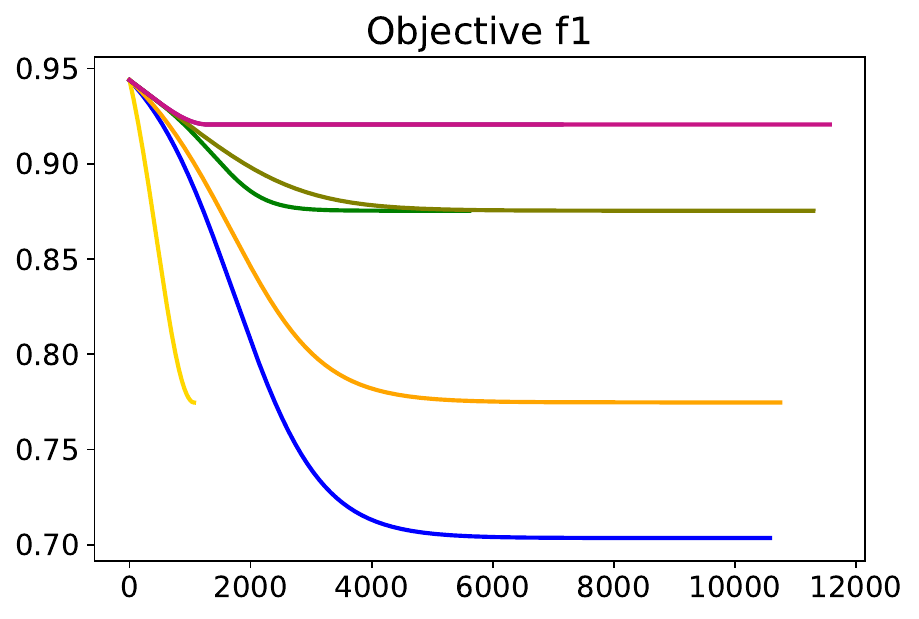}\hfill
    \includegraphics[width=0.333\linewidth]{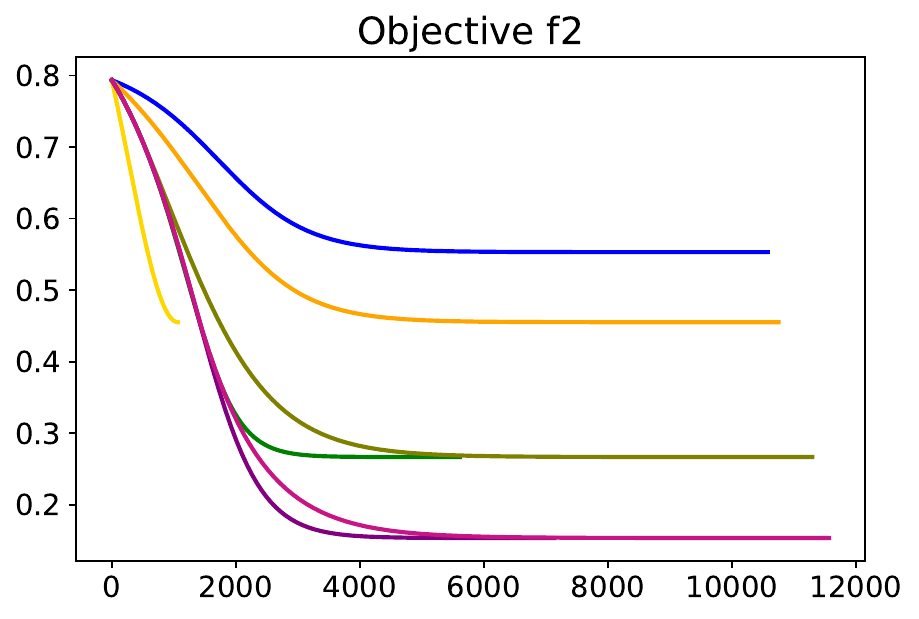}\hfill
    \includegraphics[width=0.333\linewidth]{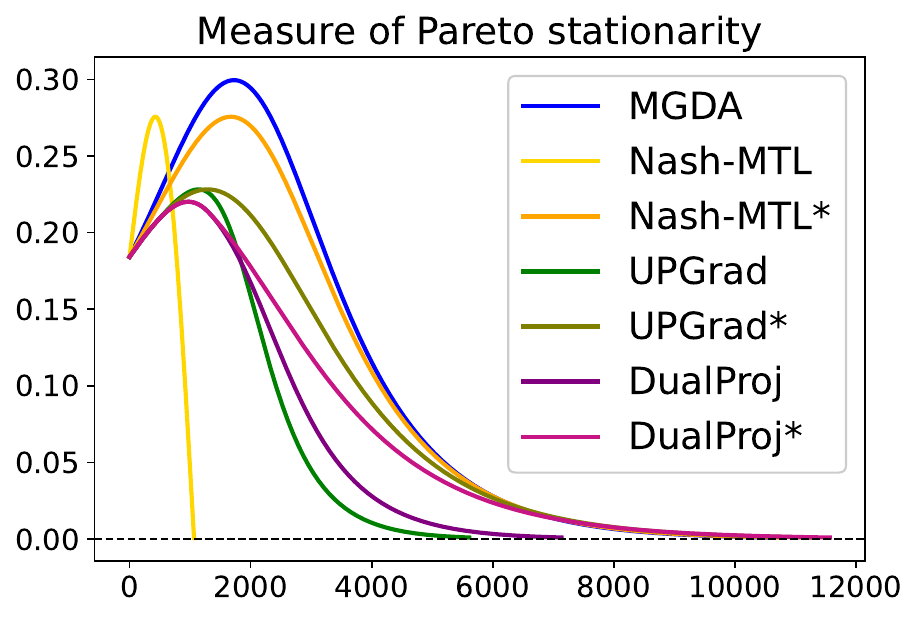}
    \caption{Dynamics of non-conflicting aggregators on VLMOP2. \textbf{Left}: objective $f_1$; \textbf{Middle}: objective $f_2$; \textbf{Right}: Pareto stationarity measure $\gamma(\wv_t)$.}
    \label{fig:vlmop2_100}
\end{figure}

\textbf{Results.}
\Cref{fig:vlmop2_100} reports the optimization dynamics of four non-conflicting aggregation methods on VLMOP2, using both the original directions and their normalized counterparts (denoted by `*'). Since normalization to the convex hull removes scale differences and places all methods on a comparable footing, these normalized variants are the ones most indicative of their intrinsic convergence behavior, and indeed they exhibit similar asymptotic rates. The unnormalized variants appear to converge faster, but only because their larger $\|\dv_t\|$ means effectively using a larger step size. This is especially visible for Nash-MTL, whose update norm stays constant even when close to stationarity.

In all cases, the Pareto-stationarity measure $\gamma(\wv_t)$ (see \eqref{eq:PS-quant}) consistently converges to zero, aligning with the guarantee in \Cref{thm:cvg_nonconflict}. \Cref{fig:omnitest_mixed_48} further demonstrates the dynamics of our mixed-aggregator scheduling (MAS) scheme using random and round-robin schedules. The trajectories confirm that switching among different non-conflicting methods within a single optimization run still ensures convergence to Pareto-stationarity, with end-phase asymptotics comparable to individual methods. This provides empirical support for the validity of MAS as suggested by our theory.

\begin{figure}[tbh]
    \centering
    \includegraphics[width=0.33\linewidth]{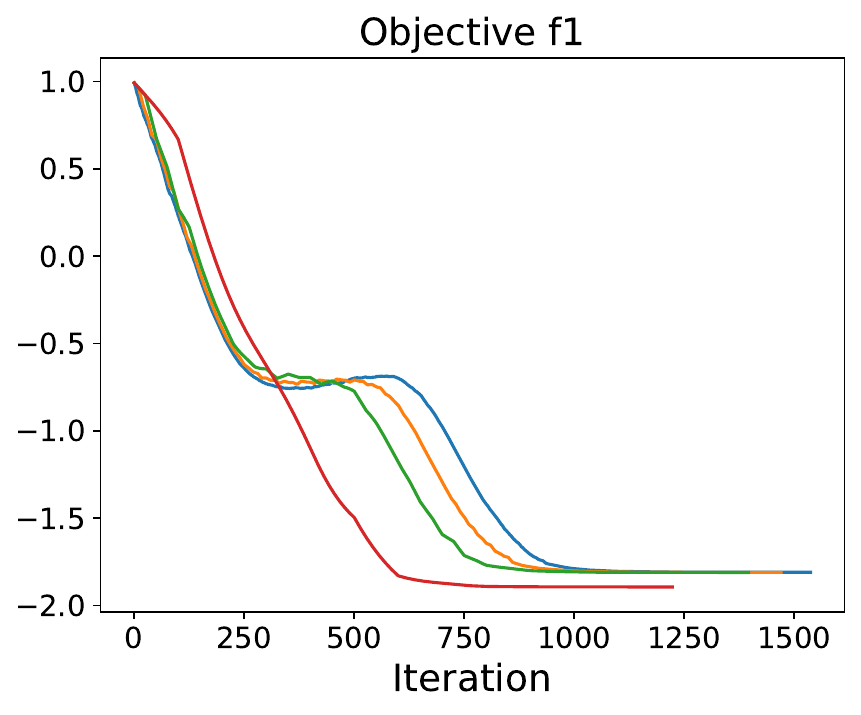}%
    \includegraphics[width=0.33\linewidth]{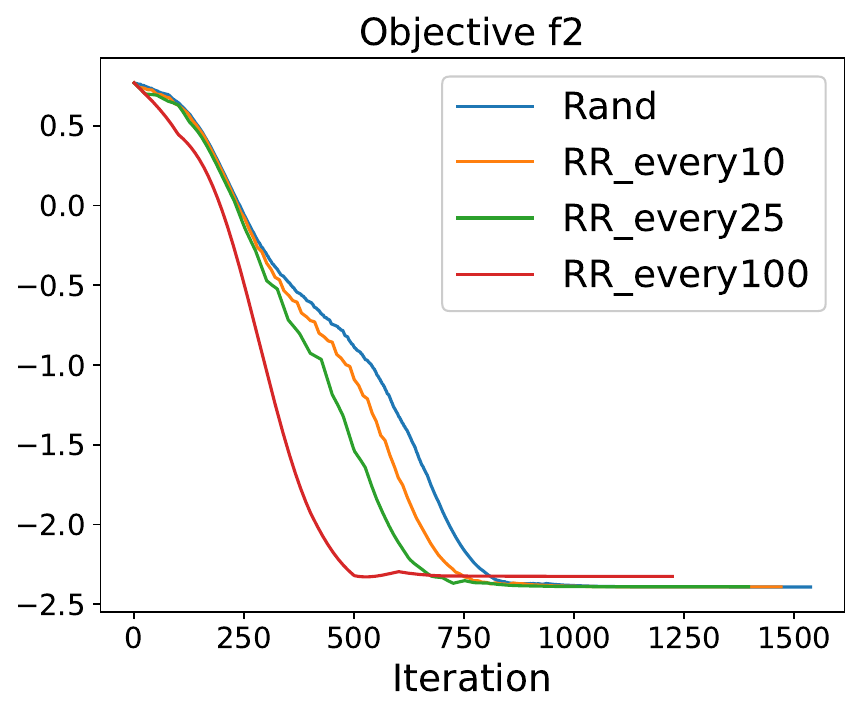}%
    \includegraphics[width=0.322\linewidth]{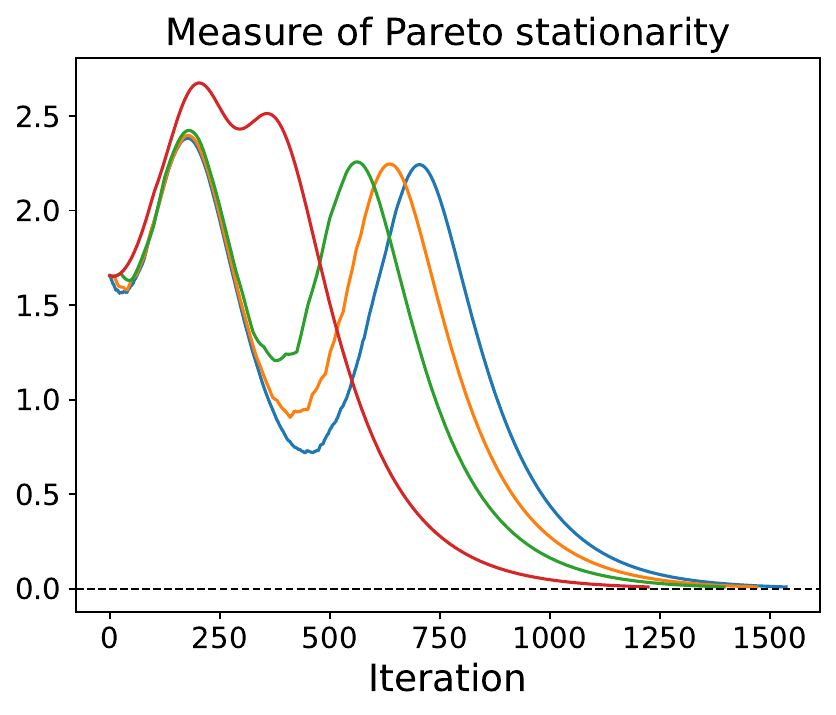}
    \caption{Dynamics of mixed aggregator scheduling on Omnitest. \textbf{Left}: objective $f_1$; \textbf{Middle}: objective $f_2$; \textbf{Right}: Pareto stationarity measure $\gamma(\wv_t)$.}
    \label{fig:omnitest_mixed_48}
\end{figure}

\textbf{Fairness classification setup.}
We follow LibMOON \citep{Zhang2024libmoon} for evaluating fairness classification on the Adult dataset, where a $4$-layer MLP is trained to predict the income level. The objectives are binary cross-entropy (utility) and smoothed relaxations of Difference of Equalized Odds \footnote{Equalized odds with $Y=1$ (used in DEO1) is also referred to as equal opportunity \citep{Hardt2016equality}.} (fairness) with $Y=1$ (DEO1) and $Y=0$ (DEO2). Details are provided in \Cref{appendix:exp}.

\textbf{Results.} \Cref{tab:fairness_table} reports the fairness results for the aforementioned methods\footnote{Nash-MTL is excluded due to instability; see the discussion in \Cref{appendix:exp}.}. MAS achieves intermediate performance: it outperforms some methods in fairness and accuracy, but falls short of others. This positions MAS as a natural baseline—its use of all aggregators within a single trial provides a representative ``average'' level of performance against which other methods can be compared.

\begin{table}[t]
\captionsetup{skip=3pt}
\centering
\caption{Fairness classification on Adult dataset. Results averaged over 4 random seeds.}
\label{tab:fairness_table}

\begingroup
\small
\setlength{\tabcolsep}{4.5pt}
\renewcommand{\arraystretch}{0.92}
\setlength{\aboverulesep}{0.4ex}
\setlength{\belowrulesep}{0.4ex}

\begin{tabularx}{\linewidth}{@{}l *{3}{>{\centering\arraybackslash}X}@{}}
\toprule
 & Acc.\ (\%)$\uparrow$ & DEO1 (\%)$\downarrow$ & DEO2 (\%)$\downarrow$ \\
\midrule
MGDA         & 75.70 $\pm$ 0.06 & 61.04 $\pm$ 0.59 & 7.56 $\pm$ 0.28 \\
Nash-MTL*    & 76.19 $\pm$ 0.11 & 58.96 $\pm$ 0.96 & 7.08 $\pm$ 0.41 \\
DualProj     & \textbf{79.02 $\pm$ 0.11} & 53.97 $\pm$ 0.37 & \textbf{6.20 $\pm$ 0.06} \\
DualProj*    & 78.55 $\pm$ 0.05 & \textbf{53.69 $\pm$ 0.30} & 6.63 $\pm$ 0.08 \\
UPGrad       & 77.66 $\pm$ 0.11 & 62.36 $\pm$ 0.58 & 6.97 $\pm$ 0.06 \\
UPGrad*      & 76.75 $\pm$ 0.08 & 60.18 $\pm$ 0.70 & 7.37 $\pm$ 0.06 \\
MAS-Rand     & 76.42 $\pm$ 0.07 & 60.90 $\pm$ 0.48 & 7.45 $\pm$ 0.10 \\
MAS-RR(1k)   & 76.41 $\pm$ 0.09 & 60.94 $\pm$ 0.57 & 7.48 $\pm$ 0.12 \\
MAS-RR(2.5k) & 76.25 $\pm$ 0.07 & 60.56 $\pm$ 0.63 & 7.69 $\pm$ 0.12 \\
\bottomrule
\end{tabularx}
\endgroup
\end{table}

\subsection{Capped MGDA}
\textbf{Setup.} We evaluate capped MGDA in a federated learning setting on the CIFAR-10 dataset, with $m=10$ clients. Each client holds distinct non-i.i.d. data partitions, and the objectives $f_i$ correspond to their individual prediction utilities. We consider an adversarial scenario where, during gradient aggregation, a malicious attacker contributes a flipped gradient (with some noise), opposite to one client's direction. The goal is to compare the robustness and effectiveness of capped MGDA against MGDA in the presence of such adversarial gradients. See further details in \Cref{appendix:exp}.

\textbf{Results.}
As shown in \Cref{fig:capped_mgda_results} (Top Left), capped MGDA achieves substantially higher per-client test accuracies than MGDA in the adversarial FL setting. This highlights MGDA’s vulnerability to adversarial gradients. Top Right panel further confirms that this gap is specific to the adversarial scenario: in the standard (no-attack) FL setting, MGDA and capped MGDA perform comparably, indicating that capping does not degrade performance when no adversary is present.

MGDA’s vulnerability arises from its min-norm update: when opposite gradients are present, MGDA assigns large weights (near $\frac{1}{2}$) to them, resulting in a much smaller update direction $\dv$, as seen in \Cref{fig:capped_mgda_results} (Bottom Left, \orange{orange} curve). This leads to ineffective progress under adversarial attacks. By limiting each gradient’s maximum contribution, capped MGDA avoids these extreme allocations and yields larger, more meaningful update directions. Finally, \Cref{fig:capped_mgda_results} (Bottom Right) shows that capped MGDA is, in general, \emph{not} a non-conflicting aggregator: smaller values of $C$ lead to the curves shifting further below zero, indicating more severe gradient conflicts.

\begin{figure}[t]
    \centering
    \includegraphics[width=0.48\linewidth]{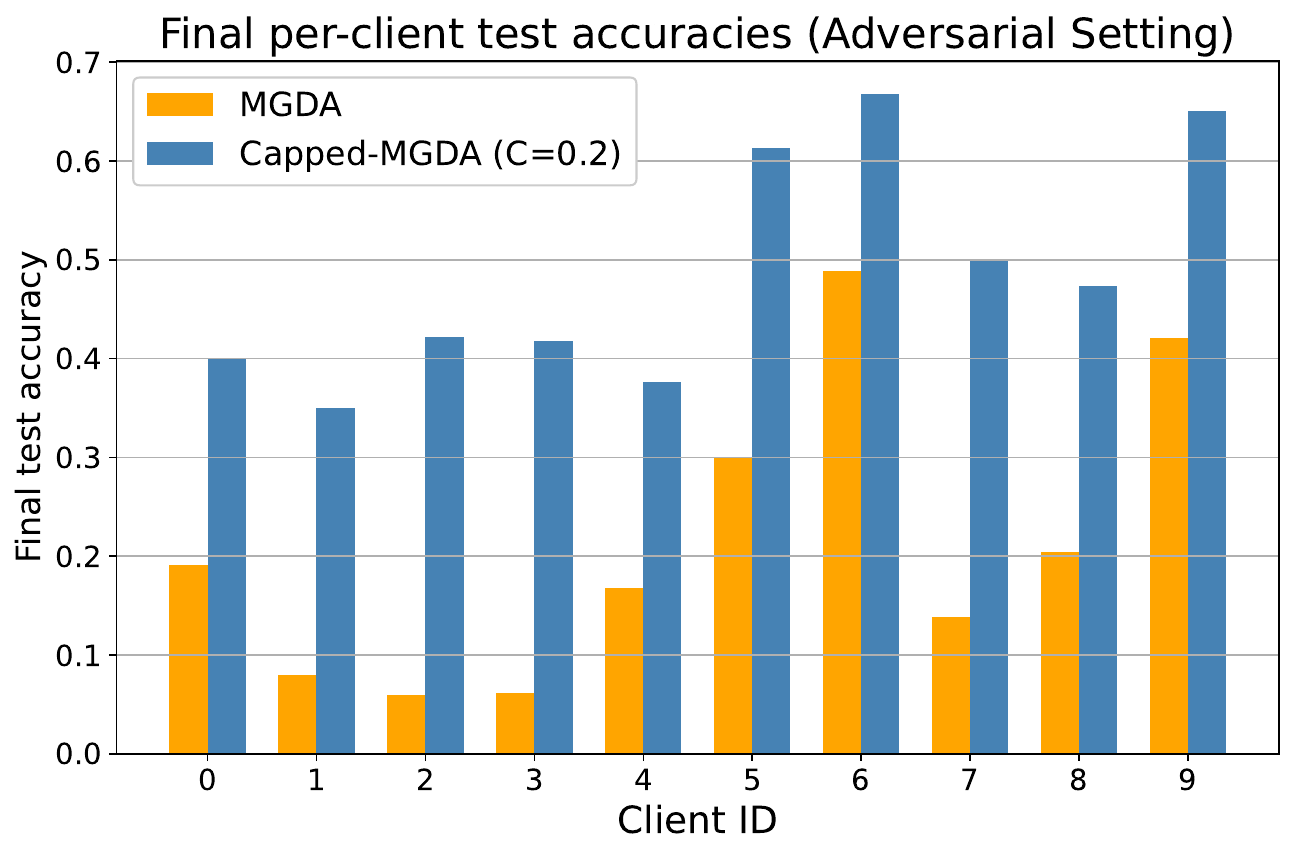}
    \includegraphics[width=0.48\linewidth]{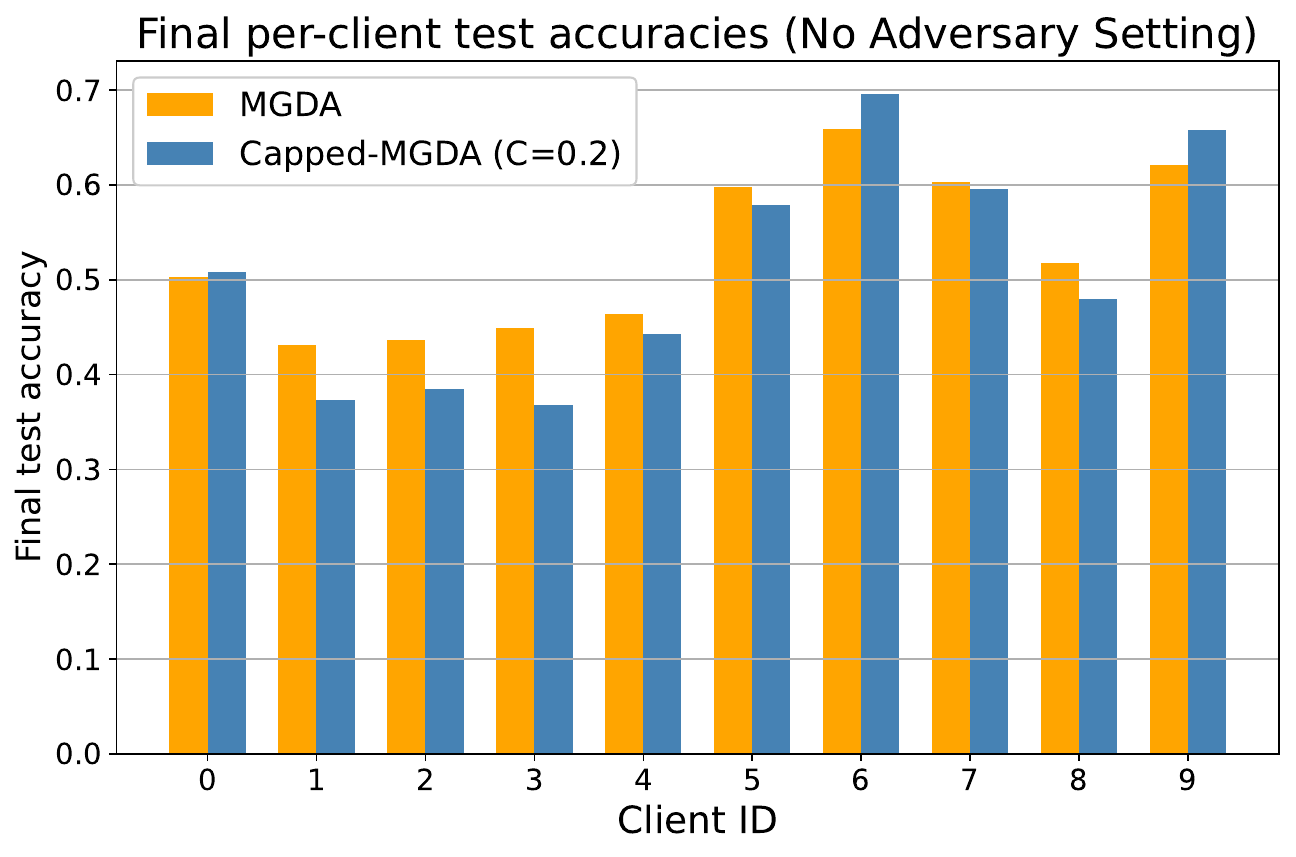}
    \includegraphics[width=0.48\linewidth]{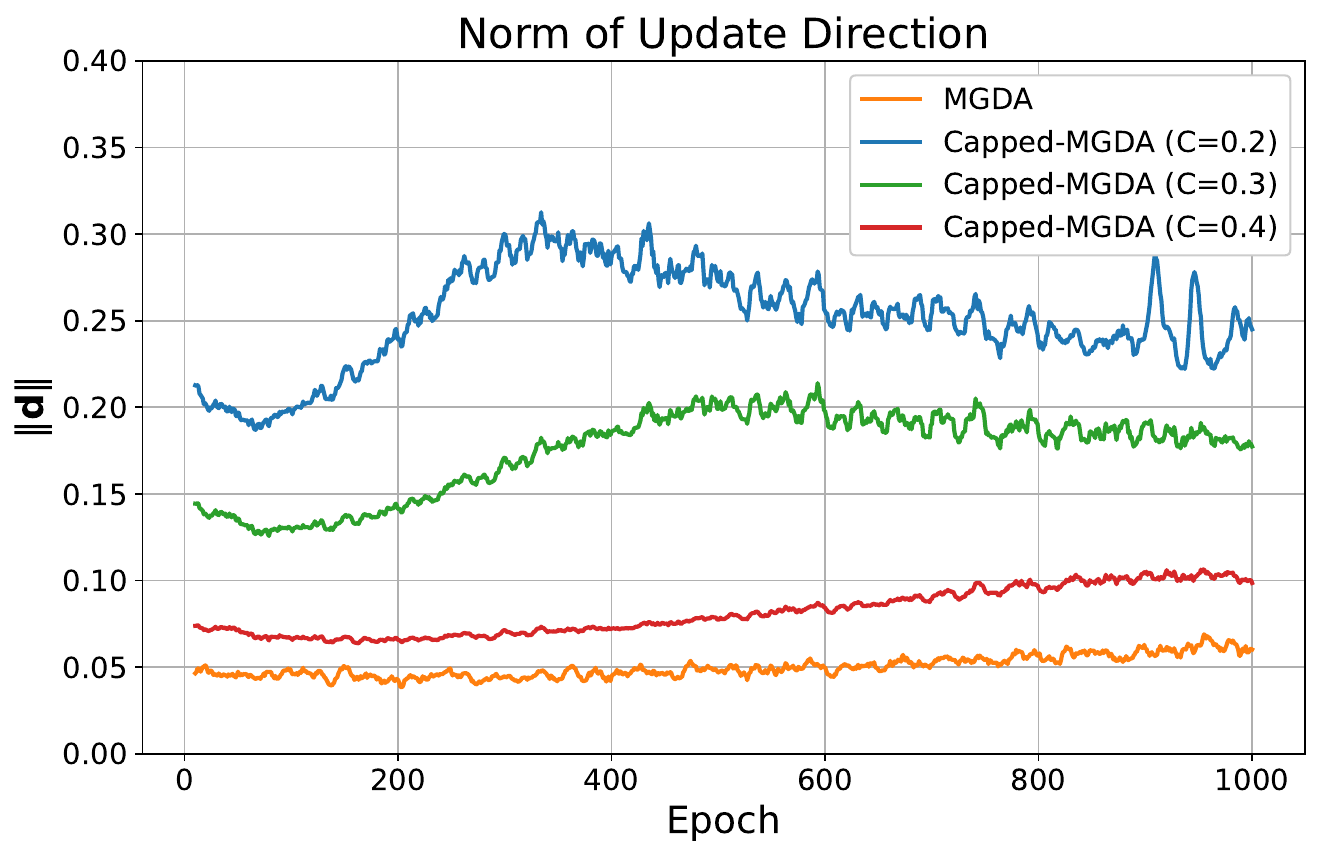}
    \includegraphics[width=0.49\linewidth]{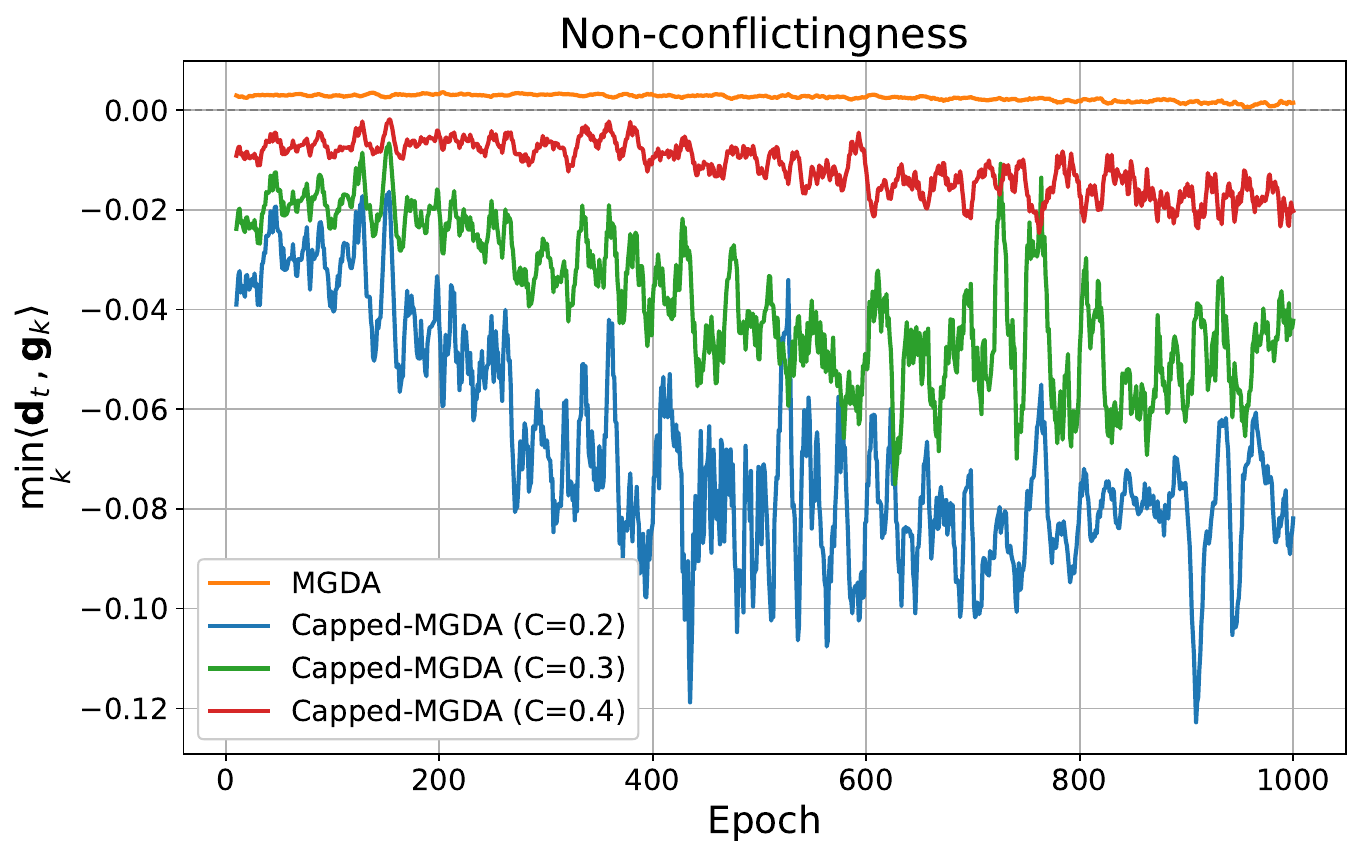}
    \caption{Capped MGDA vs. MGDA in adversarial federated learning on CIFAR-10 (1000 epochs). \textbf{Top}: Per-client test accuracies (clients 0–9) for MGDA and Capped-MGDA under both adversarial and standard Federated Learning (FL) settings.
    \textbf{Bottom Left}: Norm of the update direction $\mathbf{d}_t$ throughout adversarial FL training. \textbf{Bottom Right}: Non-conflictingness of the update direction, measured by $\min_k \langle \mathbf{d}_t, \mathbf{g}_k \rangle$ (positive values indicate non-conflicting updates), during adversarial FL training.}
    \label{fig:capped_mgda_results}
\end{figure}

\section{Conclusion}
We present a unifying framework for gradient aggregation in multi-objective optimization. Central to our framework is an alignment condition that simplifies convergence analysis in \MOO, leading to theorems with simpler and more intuitive conditions and, to our knowledge, the first comprehensive answer to the question of which directions guarantee convergence in \MOO. We further summarize and clarify a wide range of existing aggregation methods, showing that ensuring a non-conflicting direction is sufficient for convergence (Theorem 2). Inspired by Theorem 2, we propose the Mixed Aggregator Scheduling (MAS) strategy and demonstrate that mixing different aggregators within a single training run yields a valid and practical algorithm. In Theorem 4, we present a subproblem-based construction of convergent directions that applies even when the non-conflicting direction property of Theorem 2 does not hold; this framework covers many existing methods as well and also enables the design of new ones, notably capped MGDA.

Limitations and future work include tightening rates under stronger curvature (e.g., PL/strong convexity), extending the current analysis to stochastic and constrained/non-smooth settings, and exploring the Mixed Aggregator Scheduling strategy more comprehensively.

\begin{acknowledgements} 
    Briefly acknowledge people and organizations here.

    \emph{All} acknowledgements go in this section.
\end{acknowledgements}

\printbibliography[title={References}]

\newpage

\onecolumn

\title{A Unified Framework for Gradient Aggregation in Multi-Objective
Optimization\\ (Supplementary Material)}
\maketitle

\appendix

\section{More on gradient aggregations}
\label{appendix:gradient_aggr}

\begin{algorithm}[t]
\DontPrintSemicolon
\caption{Multi-Objective Descent Algorithm with Mixed Aggregator Scheduling}
\label{alg:mixed_aggr_scheduling}
   
\KwIn{Objectives $\fv$, pool of aggregation schemes $\{\mathcal{A}_i\}$, initializer $\wv_0$, learning rate $\eta_t$. } 

\SetKwFor{ParFor}{parfor}{do}{endfor}

\For{$t = 0, 1, \ldots, T-1$}{ 

 $J_{\fv}(\wv_t) \gets [\nabla f_1(\wv_t), \ldots, \nabla f_m(\wv_t)]$ \tcp*{compute gradients}

 Choose $\mathcal{A}_j$ from the pool \tcp*{select an aggregator based on a schedule}
 
 $\dv_t \gets \mathcal{A}_j(J_{\fv}(\wv_t))$ \tcp*{aggregate gradients using $\mathcal{A}_j$}
 
 $\wv_{t+1} \gets \wv_t - \eta_t \cdot \dv_t$ \tcp*{update}
}
\KwOut{final weights $\wv_T$}
\end{algorithm}

\subsection{Existing aggregators}
\label{appendix:existing_aggr}
Here, we review commonly used multi-objective gradient aggregation methods from the literature and present their optimization subproblem formulations under \eqref{eq:sd-conic} (with the corresponding choices of $s$ and $r$), along with their algorithmic formulations as implemented in practice. We also provide discussions and insights on methods that do not neatly fit into our framework (\eg, PCGrad \citealt{YuKGLHF20} and IMTL-G \citealt{LiuLKXCYLZ21}).

\subsubsection{(Uniform) Linear scalarization}
The primal optimization subproblem formulation is 
\begin{align}
    \argmin_{\mathbf{d}} \; -\frac{1}{m} \sum_{k=1}^m \langle \mathbf{d}, \mathbf{g}_k \rangle + \tfrac{1}{2}\|\mathbf{d}\|^2,
\end{align}
which corresponds to choosing $s(\mathbf{x}) = -\tfrac{1}{m}\mathbf{1}^\top \mathbf{x}$ and $r(\|\cdot\|) = \tfrac{1}{2}\|\cdot\|^2$ in \eqref{eq:sd-conic}.

The resulting gradient aggregation rule (used in practice) is
\begin{align}
\label{eq:ls_update}
    \mathbf{d} = \frac{1}{m} \sum_{k=1}^m \mathbf{g}_k.
\end{align}
Linear scalarization is not necessarily a non-conflicting aggregator (it is easy to construct cases where the averaged gradient $\mathbf{d}$ has a negative inner product with some component gradient). Nevertheless, its convergence to Pareto stationarity follows directly by taking $F=\tfrac{1}{m}\sum_i f_i$ and $c_t=1$ in \Cref{thm:PS-fd}. Moreover, the scalarization weights need not be fixed at $\tfrac{1}{m}$; any conic scalarization preserves this convergence property, and it is common to tune the weights for better performance.
\paragraph{Convergence guarantees of linear scalarization.}
The aggregation rule in \eqref{eq:ls_update} corresponds precisely to performing gradient descent on the scalarized objective $F=\sum_i f_i$. Consequently, its convergence behavior follows that of standard gradient descent in \emph{smooth} single-objective optimization, with a rate of $O(1/\sqrt{t})$ for non-convex objectives and $O(1/t)$ for convex objectives. Our framework, when applicable\footnote{In the non-convex case, Theorem~1 applies directly; however, in the convex case, gradient descent is not necessarily non-conflicting, and thus Theorem~3 cannot be invoked directly.}, recovers these same (optimal) rates, as established in the main paper.

\subsubsection{Multiple Gradient Descent Algorithm (MGDA)}
The primal optimization subproblem formulation of MGDA \citep{Mukai80,FliegeSvaiter00,Desideri12} is 
\begin{align}
        \argmin_{\dv}~ \max_k - \inner{\dv}{\gv_k} 
\end{align}
which corresponds to choosing $s(\xv) = \max_{k} (-x_k)$, $r(\|\cdot\|) = \frac{1}{2} \|\cdot\|^2$ in \eqref{eq:sd-conic}.

The resulting gradient aggregation rule (used in practice) is
\begin{align}
    \dv = J \lambdav_{*}, \quad     \mathrm{where}~ \lambdav_{*} = \argmin_{\lambdav \in \Delta} \|J^{\top} \lambdav\|^2.
\end{align}
MGDA is automatically a non-conflicting aggregator, which is easy to see from the primal perspective \eg, \citep{FliegeSvaiter00}. Its update $\dv_t$ is also in the convex full of gradients by definition of the dual problem. Thus, its convergence is immediate from \Cref{thm:cvg_nonconflict}.
\paragraph{Convergence guarantees of MGDA.}
To the best of our knowledge, the most complete complexity analysis of MGDA to date is provided by \citet{FliegeVV19}. Our framework extends this analysis to a broader class of gradient aggregation schemes. When specialized to MGDA, our general results (Theorems~2 and~3) apply without requiring additional assumptions and recover the same convergence rates as those established by \citet{FliegeVV19} for both non-convex and convex settings.

\subsubsection{Nash Bargaining Multi-Task Learning (Nash-MTL)}
\label{sec:Nash-MTL}
The primal optimization subproblem formulation of Nash-MTL \citep{NavonSAMKCF22} is 
\begin{align}
\label{eq:nashmtl_primal_appendix}
    \argmin_{\|\dv\| \leq \epsilon}~ -\sum_k \log \inner{\dv}{\gv_k},
\end{align}
which corresponds to choosing $s(\xv) = -\sum_{k} \log x_k$, $r(\|\cdot\|)  =\iota_{\mathcal{B}_\epsilon}(\cdot) := \begin{cases}0, & \|\dv\| \leq \epsilon \\ +\infty, & \text { otherwise }\end{cases}$

Note that this optimization formulation is equivalent to the $-(\prod_k x_k)^{1/m}$ we presented in the main paper, by moving the negative sign out and take the $\log$. Also, the hard ball constraint $\|\dv\| \leq \epsilon$ in \eqref{eq:nashmtl_primal_appendix} can be equivalently replaced with regularization $\frac{1}{2} \|\dv\|^2$ which will result in the same aggregation rule as in (Eq), up to constant scaling.

The resulting gradient aggregation rule (used in practice) is
\begin{align}
    \dv &= J \lambdav_{*}, \quad
    \mathrm{where}~ J^{\top} J \lambdav_* = \frac{\mathbf{1}}{\lambdav_*} \tag{Eq}
\end{align}
Nash-MTL is a non-conflicting aggregator, which can be seen directly from the primal perspective. In the official implementation, $\mathbf{d}$ is clipped to satisfy $\|\mathbf{d}\|=\epsilon$ for some fixed $\epsilon$ (default $1$). Our experiments show that applying convex-hull regularization (the variant denoted Nash-MTL*), which rescales $\lambdav_{*}$ to lie in $\Delta$, greatly stabilizes and smooths training. Thus, convex-hull regularization not only ensures convergence via \Cref{thm:cvg_nonconflict} but is also empirically preferable. Moreover, our convergence-analysis framework removes the need for the linearly independent gradients assumption required in the original work.
\paragraph{Convergence guarantees of Nash-MTL.}
For the \emph{non-convex} case, \citet{NavonSAMKCF22} established convergence under three main assumptions: (1) linear independence of the gradients $\{\gv^{(t)}_k\}$ at each iterate $\wv^{(t)}$ and at the limit, (2) Lipschitz smooth and lower-bounded objectives $f_k$, and (3) bounded sub-level sets. Under these conditions, they showed that the sequence $\{\wv^{(t)}\}_{t=1}^\infty$ admits a subsequence converging to a Pareto-stationary point.
In contrast, under our framework (e.g., Theorem~2), assumptions (1) and (3) are no longer required. Instead, we establish an explicit convergence rate of $O(1/\sqrt{t})$ with respect to the degree of Pareto stationarity $\gamma(\wv_t)$, merely from assumption 2.

For the \emph{convex} case, both \citet{NavonSAMKCF22} and our analysis (Theorem~3) rely on the same standard assumptions. While \citet{NavonSAMKCF22} proves convergence of $\wv_t$ to a weakly Pareto-optimal solution, our framework provides a rate of $O(1/t)$ in terms of the function value gap to optimality.

\subsubsection{Fair Resource Allocation in MTL (FairGrad)}
\label{sec:fairgrad}
To provide an $\alpha$-fair framework for MTL (and MOO in general), FairGrad \citep{BanJi24} proposes the following subproblem to optimize:
\begin{align}
    \argmin_{\|\dv\| \leq \epsilon} - \sum_{k} \frac{(\inner{\dv}{\gv_k})^{1-\alpha}}{1-\alpha}, ~\text{s.t.} \inner{\dv}{\gv_k} \geq 0, ~\forall k.
\end{align}
which has the resulting gradient aggregation rule:
\begin{align}
    \dv &= J \lambdav_{*}, \quad
    \mathrm{where}~ J^{\top} J \lambdav_* = {\lambdav_*}^{- 1/\alpha} \tag{eq2}
\end{align}

Similar to our power-mean-based formulation, FairGrad can recover Linear Scalarization, Nash-MTL, harmonic mean ($p=-1$), and MGDA when $\alpha \rightarrow 0,1,2,\infty$. Note that $-\tfrac{x^{1-\alpha}}{1-\alpha}$ is decreasing and convex for $x > 0$ and $\alpha \ge 0$, so \Cref{thm:PSreg} applies as long as the constraint $\dv \in B_{\epsilon}$ is replaced by a quadratic penalty term $\tfrac12 \|\dv\|^2$, though its implementation may be cumbersome in practice.

\paragraph{Convergence guarantees of FairGrad.}
The original FairGrad work provides a convergence analysis in the non-convex setting. 
Similar to the proof of Nash-MTL, it requires the following assumptions: 
(1) linear independence of the gradients at each iterate, 
(2) Lipschitz smooth and lower-bounded objectives $f_k$, and 
(3) bounded sub-level sets. 
Under these assumptions, the authors establish subsequence convergence.

Our framework (i.e. Theorem~2) applies because FairGrad produces non-conflicting directions, and the convex-hull requirement can be satisfied by rescaling $\dv_t$ and absorbing the resulting constant into the step size $\eta_t$, we note that this effectively introduces a dynamic step size. The original FairGrad proof also accommodates a dynamic step size, which further justifies our modification. 
Under our framework, assumptions (1) and (3) are no longer required.  Relying only on assumption~(2), we can obtain an explicit convergence rate of $O(1/\sqrt{t})$ with respect to the degree of Pareto stationarity $\gamma(\wv_t)$, albeit without guaranteeing pointwise subsequence convergence of $\wv_t$.

\subsubsection{Performance-Informed Variance Reduction Gradient aggregation (PIVRG)}
PIVRG \citep{QinWY25} proposes to minimize the (weighted) mean of the inverse utilities as the subproblem:
\begin{align}
    \argmin_{\|\dv\| \leq \epsilon} \frac{1}{m} \sum_{k=1}^{m} \frac{\omega_k}{\inner{\dv}{\gv_k}}
\end{align}
where $\omega_k$ are dynamic coefficients incorporating performance-level information.

For gradient aggregation rule used in practice, they solve for:
\begin{align}
    \dv &= J \lambdav_{*}, \quad
    \mathrm{where}~ J^{\top} J \lambdav_* = (\frac{\omegav}{\lambdav_*})^{\tfrac{1}{2}}
\end{align}
We note that PIVRG is conceptually very close to FairGrad (with $\alpha=2$), though PIVRG introduces a novel design of the weighting coefficients $\omegav$ to achieve improved variance reduction.

\paragraph{Convergence guarantees of PIVRG.}
The theoretical assumptions and convergence results of PIVRG largely mirror those of FairGrad, and our general framework applies in a similar fashion. To avoid redundancy, we refer the reader to the FairGrad section for a detailed discussion and comparison of the convergence guarantees.

\subsubsection{Unconflicting Projection of Gradients (UPGrad)}
The primal optimization subproblem formulation of UPGrad \citep{quinton2024jacobian} is 
\begin{align}
        \argmin_{\dv}~ - \frac{1}{m} \sum_{k=1}^m \inner{\dv}{\pv_k} + \frac{1}{2}\|\dv\|^2,\\
        \text{where} \quad \pv_k := \proj_{\cone^*(J)}(\gv_k), ~ J \alphav_k := \pv_k.
\end{align}
which corresponds to choosing $s(\xv)=-(\frac{1}{m} \sum_k \alphav_k)^{\top} \xv$, $r(\|\cdot\|) = \frac{1}{2} \|\cdot\|^2$ in \eqref{eq:sd-conic}.

The resulting gradient aggregation rule (used in practice) is
\begin{align}
    \dv = \frac{1}{m} \sum_k \pv_k = J (\frac{1}{m} \sum_{k=1}^m \alphav_k).
\end{align}
which first projects each gradient onto the dual cone $ \{\dv: J^\top \dv \geq \zero\}$ and then averages them.

(I) UPGrad is a non-conflicting aggregator since it first projects all gradients onto the dual cone. 

(II) UPGrad is closely related to PCGrad \citep{YuKGLHF20}, and in fact coincides with it when $m=2$. We discuss this connection in detail in the PCGrad section.

\paragraph{Convergence guarantees of UPGrad.}
For the \emph{non-convex} setting, \citet{quinton2024jacobian} (Appendix~B.4) established an $O(1/\sqrt{t})$ convergence rate under the same assumptions of Lipschitz smoothness and lower boundedness of each $f_i$ as those required by our framework (e.g., Corollary~2). Although the rate is identical, their analysis relies on a method-specific proof, whereas our framework provides a more general and conceptually unified derivation.

For the \emph{convex} setting, both \citet{quinton2024jacobian} and our analysis establish an $O(1/t)$ convergence rate in terms of the function value gap, under the same assumptions of Lipschitz smoothness and convexity. While rate is the same, our upper bound is slightly tighter. Additionally, under the extra assumptions of (1) a bounded Pareto front and (2) bounded coefficients $\lambdav_t$, \citet{quinton2024jacobian} used a method-specific proof to show the convergence of $\fv(\wv_t)$ to $\fv^*$. This result can also be obtained within our framework, through a direct application of Theorem~3 and upper bounding $\lambdav$.

\subsubsection{DualProj}
The primal optimization subproblem formulation of DualProj \citep{lopez2017gradient} is 
\begin{align}
    \argmin_{\dv} - \sum_{k=1}^m \alpha_k \langle \mathbf{d}, \mathbf{g}_k \rangle + \frac{1}{2} \|\dv\|^2,\\
    \text{where} \quad J \alphav := \proj_{\cone^*(J)}(\frac{1}{m} \sum_k \gv_k).    
\end{align}

The resulting gradient aggregation rule (used in practice) is
\begin{align}
    \dv = J \alphav
\end{align}
DualProj is a non-conflicting aggregator since it projects a convex combination of gradients onto the dual cone, see \Cref{prop:PS_proj}.

\paragraph{Convergence guarantees of DualProj.}
The original work of \citet{lopez2017gradient} does not appear to provide a formal convergence analysis. Within our framework, \Cref{thm:PS_proj_cvg} establishes an $O(1/\sqrt{t})$ convergence rate for DualProj in  terms of the Pareto stationarity measure $\gamma(\wv_t)$ for the non-convex setting. For the convex setting, we can apply Theorem~3 and establish a $O(1/t)$ rate.

\subsubsection{ Conflict-Averse Gradient descent (CAGrad)}
The primal optimization subproblem formulation of CAGrad \citep{LiuLJSL21} is
\begin{align}
        \argmin_{\dv}~ \max_k - \inner{\dv}{\gv_k}, ~\quad \text{s.t.}~ \|\dv-\gv_0\| \leq  c \|\gv_0\|.
\end{align}
where $\gv_0 := \frac{1}{m} \sum_k \gv_k$, and $c$ is a pre-specified hyper-parameter that controls the radius of the ball constraint centered around the average gradient $\gv_0$. 

The resulting gradient aggregation rule (used in practice) is
\begin{align}
    \dv = \gv_0 + \epsilon ~\gv_{\lambdav_*}
\end{align}
where $\epsilon := c \|\gv_0\|$, $\gv_{\lambdav_*} := J \lambdav_*$, and $\lambdav_*$ is the solution to 
\begin{align}
    \argmin_{\lambdav \in \Delta} \inner{\gv_0}{J \lambdav} + \epsilon \|J \lambdav\|.
\end{align}
CAGrad is a direction-oriented variant of MGDA. Due to the existence of the ball constraint, it is no longer a non-conflicting aggregator when $c < 1$; however, it is non-conflicting when $c \geq 1$ (since $\dv= \mathbf{0}$ is a feasible point).

\paragraph{Convergence guarantees of CAGrad.}
The convergence behavior of CAGrad depends on the choice of the hyper-parameter $c$. The original work of \citet{LiuLJSL21} considers the non-convex setting, and:

(1) For $c \geq 1$, the authors only showed that the fixed point of CAGrad is Pareto stationary. In contrast, by noting that CAGrad is \emph{non-conflicting} in this case, our framework (Theorem~2) can be directly applied to \emph{strengthen} the guarantee to an $O(1/\sqrt{t})$ convergence rate measured by $\gamma(\wv_t)$.

(2) For $0 \leq c < 1$, \citet{LiuLJSL21} established an $O(1/\sqrt{t})$ rate to stationarity of the averaged objective $\frac{1}{m}\sum_{i=1}^m f_i$, and thus to Pareto stationarity of $\fv$. We point out this convergence guarantee is a matter of fact of the ball constraint rather than the objective, and that we can directly apply the angle constraint \eqref{eq:angle_constraint} in Theorem~1 (with $F=\frac{1}{m} \sum_{i=1}^m f_i$) to obtain the same result.

In the convex setting, the original work does not provide extra convergence guarantees. However, when $c \geq 1$, CAGrad remains non-conflicting, and therefore our framework (Theorem~3) can be applied to this case, and establish a rate of $O(1/t)$.

\subsubsection{Projecting Conflicting Gradients (PCGrad)}

PCGrad \citep{YuKGLHF20} aims to fix each gradient $\gv_k$ by initializing $\gv_k^{\mathrm{PC}} \gets \gv_k$, and then iteratively projecting it onto the normal planes of the other gradients, for one pass over $[m]$ only:
\begin{align}
    \text{for} ~i \in [m],~
    \gv_k^{\mathrm{PC}} \gets \gv_k^{\mathrm{PC}} + \frac{(-\gv_k^{\mathrm{PC}} \cdot \gv_i)_+}{\left\|\gv_i\right\|^2} \gv_i
\end{align}
and finally gather and average the $\gv_k^\mathrm{PC}$:
\begin{align}
    \dv = \frac{1}{m} \sum_{k=1}^m \gv_k^{\mathrm{PC}}.
\end{align}

We highlight several important points:
\begin{itemize}
    \item When the number of objectives is $m=2$, PCGrad is equivalent to UPGrad, since in this case projection to the `dual cone' coincides with projection to the `normal plane of the other gradient'. Notably, $m=2$ is also the only setting in which the PCGrad paper provides theoretical guarantees. Thus, for two-objective optimization, PCGrad can be regarded as UPGrad, the latter being studied more extensively in this paper.
    \item When $m \geq 3$, there is no guarantee that the resulting $\gv_k^{\mathrm{PC}}$ (and thus $\dv$) is a non-conflicting direction, because PCGrad performs only a single pass of iterative projections rather than continuing until $\gv_k^{\mathrm{PC}}$ lies in the dual cone. For a counter-example, let 
    \begin{align}
        \gv_1 = (1, \sqrt{3}, z), \quad 
        \gv_2 = (-2, 0, z), \quad 
        \gv_3 = (1, -\sqrt{3}, z),
    \end{align}
    where $z$ is a small positive constant (e.g., $z = 0.1$). 
    Applying the standard PCGrad procedure in order and averaging the adjusted gradients, 
    the resulting aggregated direction $\dv$ has a negative inner product with $\gv_1$.
    
    \item PCGrad can be modified to \red{repeatedly project until} $\gv_k^{\mathrm{PC}}$ lies in the dual cone, and we name this new variant \red{\emph{PCGrad+}} (see \Cref{alg:pcgrad+}). In this case, PCGrad+ again resembles UPGrad, except that UPGrad performs a one-step projection directly onto the dual cone $C^*$, whereas PCGrad+ repeatedly performs alternating projections onto the half-spaces $H_i=\{\zv:\langle \gv_i, \zv \rangle \geq 0\}$, whose intersection is the dual cone, i.e. $C^*=\bigcap_{i=1} H_i$. PCGrad+ is also sensitive to the ordering of objectives, which is arguably an undesirable property.
\end{itemize}

\begin{algorithm}[H]
\DontPrintSemicolon
\caption{PCGrad\red{$^{+}$} Aggregation}
\label{alg:pcgrad+}
\KwIn{Gradients $\gv_k := \nabla f_k(\wv)$ of each objective $f_k$.}
\For{$k \in [m]$}{
    $\gv_k^{\mathrm{PC}} \gets \gv_k$ \tcp*{Initialize projected gradient}
    \RepeatRed{$\gv_k^{\mathrm{PC}}$ is non-conflicting with all $\gv_i$}{\red{\tcp{key distinction from PCGrad}}
    \For{$i \in [m]$}{
        $\gv_k^{\mathrm{PC}} \leftarrow \gv_k^{\mathrm{PC}} 
        + \dfrac{(-\,\gv_k^{\mathrm{PC}} \cdot \gv_i)_+}{\|\gv_i\|^2}\, \gv_i$ \tcp*{Resolve conflicts via projection}
    }
}
}
\KwOut{Aggregated direction $\dv \gets \tfrac{1}{m}\sum_{k=1}^m \gv_k^{\mathrm{PC}}$.}
\end{algorithm}

\paragraph{Convergence guarantees of PCGrad(\red{$+$}).}
To the best of our knowledge, \citet{YuKGLHF20} provided a convergence analysis of PCGrad under reasonable assumptions \emph{only} for the case of two objectives ($m=2$). As noted above, when $m=2$, PCGrad coincides with UPGrad, and therefore our framework for UPGrad applies directly to PCGrad in this special case.

When $m \geq 3$, however, the output of PCGrad aggregation (1) depends on the order of the input objectives and (2) is not guaranteed to be non-conflicting, both of which pose challenges for a general convergence analysis (if attainable at all).

To address this, we introduce a modified variant, termed \emph{PCGrad+}, which repeatedly performs the gradient-fixing projections until the resulting direction $\dv_t$ lies within the dual cone. For PCGrad+, our general results, Theorem~2 (non-convex) and Theorem~3 (convex), can be readily applied to establish convergence for arbitrary $m \geq 3$.

\subsubsection{Impartial Multi-task Learning Gradient (IMTL-G)}
Let $\mathbf{n} = [\|\mathbf{g}_1\|,\ldots,\|\mathbf{g}_m\|]^{\top}$.  
IMTL-G \citep{LiuLKXCYLZ21} aggregates the gradients as
\begin{align}
    \mathbf{d} = J \boldsymbol{\lambda}, 
    \quad \text{where} \quad 
    \boldsymbol{\lambda} = (J^{\top} J)^{\dagger} \mathbf{n},
\end{align}
and then applies a (possibly negative) rescaling to enforce $\sum_i \lambda_i = 1$ to get normalized update $\tilde{\mathbf{d}}$.

Interestingly, a recent work \citep{liu2025config} on Physics-Informed Neural Networks is closely related to IMTL-G. However, it does not enforce the constraint $\sum_i \lambda_i = 1$, and instead introduces a different form of normalization.

It's worth noting that:

(I) The update direction $\mathbf{d}$ is \emph{not necessarily} non-conflicting unless the gradients are normalized. As a counterexample, consider the following Jacobian $J$ (whose row vectors are the gradients), where $\mathbf{d}$ turns out to be conflicting with all gradients:
\begin{align}
J =
\begin{pmatrix}
  8.660254  & -5        & 0.0 \\
 -8.660254  & -5        & 0.0 \\
  0.0       & -0.99498744 & 0.1
\end{pmatrix}
\end{align}

(II) $\mathbf{d}$ is \emph{not necessarily} in the cone either. As a counterexample, consider the same matrix $J$, but with the first two rows divided by $10$.

\paragraph{Convergence guarantees of IMTL-G.} 
The original work of \citet{LiuLKXCYLZ21} does not appear to include a formal convergence analysis. Due to the irregularities and drawbacks discussed above, IMTL-G is also difficult to incorporate into our theoretical framework, since its update direction $\mathbf{d}$ does not necessarily lie within either the cone or the dual cone.

\subsubsection{Random Gradient Weighting (RGW)}
Random Gradient Weighting \citep{lin2021reasonable} uses random weighting to aggregate the gradients:
\begin{align}
    \dv = J \cdot ~\mathrm{softmax}(\boldsymbol{\xi}), ~\text{where} \quad \boldsymbol{\xi} \sim \mathcal{N}(\mathbf{0}, I)
\end{align}
We can formulate a primal optimization subproblem:
\begin{align}
    \argmin_{\dv}  - \sum_{k=1}^m \xi_k \inner{\dv}{\gv_k} + \frac{1}{2} \|\dv\|^2.
\end{align}

\paragraph{Convergence guarantees of RGW.} 
Unlike all aforementioned methods, RGW admits no fixed point (except for the degenerate case where all $\gv_k = \mathbf{0}$), and therefore cannot terminate or converge at any non-trivial Pareto stationary point. This poses an inherent challenge to establishing convergence guarantees for RGW. The original work provides only an upper bound on the function value gap, which can be unbounded unless all gradients are assumed to be bounded. Moreover, the bound does not vanish as $t \to \infty$. In summary, we find it difficult to establish a theoretical convergence guarantee for this method without imposing very strong, if not unrealistic, assumptions.

\subsection{New Aggregations}
\label{appendix:new_aggr}
This section provides additional details on newly proposed gradient aggregation methods: (i) Capped MGDA, which is presented in the main paper; and (ii) Greedy Aggregation with Dual Cone Projection (Greedy\mbox{-}DCP), which we introduce here.

\subsubsection{Capped MGDA}
\label{appendix:cap_mgda_derivation}
Here we derive the dual formulation of Capped MGDA from the CVaR primal formulation:
\begin{align}
\label{eq:cap_mgda_primal_appendix}
    \min_{\dv,\alpha} ~ \alpha + C \sum_{k=1}^m \max \left\{0,\left\langle-\mathbf{d}, \mathbf{g}_k\right\rangle-\alpha\right\} + \frac{1}{2} \|\dv\|^2, \tag{primal}
\end{align}
\begin{proof}
First, as a standard optimization technique, the above is equivalent to
\begin{align}
    \min_{\dv,\alpha} \max_{0 \leq \lambda_k \leq C} ~  \alpha + \sum_{k=1}^m \lambda_k( \left\langle-\mathbf{d}, \mathbf{g}_k\right\rangle-\alpha ) + \frac{1}{2} \|\dv\|^2    
\end{align}
Dual problem is formed by switching the min-max (strong duality holds since the objective is convex in $\dv,\alpha$; and linear in $\lambdav$ defined on a compact domain), and simplify:
\begin{align}
    &\max_{0 \leq \lambda_k \leq C} \min_{\dv,\alpha} ~  \alpha + \sum_{k=1}^m \lambda_k( \left\langle-\mathbf{d}, \mathbf{g}_k\right\rangle-\alpha ) + \frac{1}{2} \|\dv\|^2\\
    &\max _{0 \leq \lambda_k \leq C} \min_{\dv, \alpha} ~ \alpha\left(1-\sum_{k=1}^n \lambda_k\right)-\left\langle\mathbf{d}, \sum_{k=1}^n \lambda_k \mathbf{g}_k\right\rangle + \frac{1}{2} \|\dv\|^2\\
    &\max _{\lambda_k \leq C, \lambdav \in \Delta} \min_{\dv} ~ -\left\langle\mathbf{d}, \sum_{k=1}^n \lambda_k \mathbf{g}_k\right\rangle + \frac{1}{2} \|\dv\|^2
\end{align}
For the last line, the inner minimization is achieved when $\dv = \sum_{k=1}^n \lambda_k \mathbf{g}_k$.

Then substitute this in and move the negative sign out, we reach the final dual problem:
\begin{align}
    \min_{\lambdav \leq C, \lambdav \in \Delta} \|\sum_{k=1}^n \lambda_k \mathbf{g}_k\|^2,
\end{align}
Equivalently, in Jacobian notation:
\begin{align}
    \min_{\lambdav \leq C, \lambdav \in \Delta} \|J \lambdav\|^2, ~ \text{and} ~ \dv = J \lambdav.
\end{align}
\end{proof}

We call this new aggregation method \emph{Capped MGDA}, since its dual problem  closely resembles MGDA, with an additional cap constraint on the coefficients, i.e. $\lambdav \leq C$.

While one may view Capped MGDA as a natural extension of the dual formulation of MGDA, we emphasize that its primal CVaR formulation not only offers an intuitive interpretation of the method's objective, but also greatly facilitates the convergence analysis, which would be presumably difficult to establish from the seemingly simpler dual formulation.

\subsubsection{Greedy Aggregation with Dual Cone Projection (Greedy-DCP)}
\label{appendix:new_aggr2}
This method is a \emph{non-conflicting} aggregator that, similar to UPGrad, relies on projecting gradients onto the dual cone as the first step.
We first project each gradient $\gv_k$ onto the dual ${\cone}^*{J} := \{\dv: J^\top \dv \geq \zero\}$, and denote the result by $\pv_k$. 
The greedy aggregation is then formulated as
\begin{align}
    \argmin_{\dv} \min_k \Big( -\inner{\pv_k}{\dv} + \tfrac{1}{2}\|\dv\|^2 \Big)
\end{align}
By switching the order of minimization, we obtain the algorithmic update:
\begin{align}
    \dv = \pv_i, \quad \text{where } i = \argmax_k \|\pv_k\|.
\end{align}
The convergence of Greedy-DCP follows directly from \Cref{thm:PS_proj_cvg} established in the main paper.
\section{Proofs omitted from the main text}
\label{sec:appendix_proof}

\PSfd*
\begin{proof}
Applying the descent lemma we have 
\begin{align}
    F(\wv_{t+1}) &\leq F(\wv_t) - \eta_t\inner{\nabla F(\wv_t)}{\dv_t} + \tfrac{L}{2}\|\eta_t \dv_t\|^2 
    \\
    &\leq F(\wv_t) - c_t \Gamma_t \|\eta_t\dv_t\| + \tfrac{L}{2}\|\eta_t \dv_t\|^2
\end{align}
Optimizing $\eta_t = \tfrac{c_t \Gamma_t}{L\|\dv_t\|}$, we obtain 
\begin{align}
\label{eq:PS-sd}
    F(\wv_{t+1}) \leq F(\wv_t) - \frac{c_t^2\Gamma^2_t}{2L}.
\end{align}
Telescoping and noting that $F \geq 0$: 
\begin{align}
\label{eq:PS-ts}
    \sum_t c_t^2 \Gamma^2_t \leq 2L F(\wv_0),
\end{align}
whence follows both claims.
\end{proof}
[Optionally] For a weaker conclusion under more relaxed assumption, the same proof also shows that 
\begin{itemize}
    \item if $\sum_t c_t^2 = \infty$, then $\liminf_{t\to\infty} \Gamma_t = 0$,
\end{itemize}
namely that, there exists a subsequence of $\Gamma_t$ that converges to 0.

\PSsuff*
\begin{proof}
We omit the index $t$ in the following to simplify the notation. 

Let $F = \sum_k f_k$ and $\dv = J_{\fv}(\wv) \lambdav$ for some $\lambdav\in\Delta$. We directly verify \eqref{eq:PS-cvx}:
\begin{align}
    \inner{\dv}{\nabla F(\wv)} = \sum_k \inner{\dv}{\nabla f_k(\wv)}
    \geq \sum_k \lambda_k \inner{\dv}{\nabla f_k(\wv)}
    = 
    \inner{\dv}{\sum_k \lambda_k\nabla f_k(\wv)}
    = \|\dv\|^2,
\end{align}
where the inequality is due to $\dv\in\cone^*(J_{\fv}(\wv))$ so that $\inner{\dv}{\nabla f_k(\wv)}\geq 0$ and $\lambda_k \in [0,1]$.
\end{proof}

\PSproj*
\begin{proof}
    For any (closed) convex cone $K$, we recall Moreau's celebrated decomposition \citep{Moreau62}: 
    \begin{align}
        \dv = \proj_K(\qv), \dv^* = -\proj_{K^*}(-\qv) \iff \dv \perp \dv^*, \dv + \dv^* = \qv, \dv \in K, \dv^* \in -K^*.
    \end{align}
    Thus, with $K = \cone^*(J)$ and hence $K^*=\cone(J)$, we have 
    \begin{align}
        \dv = \qv + \proj_{\cone(J)}(-\qv) = J\muv + J \alphav, \where \alphav \geq \zero.
    \end{align}
    It follows that we can set $\nuv = \muv + \alphav$, and the proof is complete.
\end{proof}

\PScvxthm*
\begin{proof}
From $L$-smoothness, we have
\begin{align}
    f_k(\wv_{t+1}) \leq f_k(\wv_t) - \eta \inner{\dv_t}{\nabla f_k(\wv_t)} + \frac{L \eta^2}{2} \|\dv_t\|^2,
\end{align}
Since $f$ is convex, for all $\wv$:
\begin{align}
    f_k(\wv_{t+1}) \leq f_k(\wv) + \inner{\wv_t-\wv}{\nabla f_k(\wv_t)} - \eta \inner{\dv_t}{\nabla f_k(\wv_t)} + \frac{L \eta^2}{2} \|\dv_t\|^2.
\end{align}
Rearranging and simplifying:
\begin{align}
f_k(\wv_{t+1}) - f_k(\wv) \leq  \inner{\wv_t-\wv - \eta \dv_t}{\nabla f_k(\wv_t)} + \frac{L \eta^2}{2} \|\dv_t\|^2.
\end{align}
Taking inner product with $\lambdav_t$ on both sides:
\begin{align}
 \inner{\lambdav_t}{\fv(\wv_{t+1}) - \fv(\wv)} \leq \inner{\wv_t-\wv - \eta \dv_t}{\dv_t} + \tfrac{L \eta^2}{2} \|\dv_t\|^2
 = \inner{\wv_t-\wv}{\dv_t} + (\tfrac{L \eta^2}{2}-\eta) \|\dv_t\|^2.
\end{align}
As long as $\eta \leq \frac{1}{L}$, the above implies
\begin{align}
    \inner{\lambdav_t}{\fv(\wv_{t+1}) - \fv(\wv)} &\leq \frac{1}{2 \eta} (\|\wv_t-\wv\|^2-\|\wv_t-\wv-\eta \dv_t\|^2)\\
    &= \frac{1}{2 \eta} (\|\wv_t-\wv\|^2-\|\wv_{t+1}-\wv\|^2).
\end{align}
Telescoping we arrive at:
\begin{align}
    \frac1T\sum_{t=0}^{T-1} \inner{\lambdav_t}{\fv(\wv_{t+1}) - \fv(\wv)} \leq \frac{\|\wv_0-\wv\|^2}{2 \eta T} 
\end{align}
Since $\fv(\wv_t)$ monotonically decreases, we further lower bound the left-hand side: 
\begin{align}
    \frac1T\sum_{t=0}^{T-1} \inner{\lambdav_t}{\fv(\wv_{T}) - \fv(\wv)} \leq \frac{\|\wv_0-\wv\|^2}{2 \eta T}
\end{align}
Denoting $\lambdav := \frac{1}{T} \sum_{t=0}^{T-1} \lambdav_t$, we conclude that for all $\wv$:
\begin{align}
\label{eq:cvx-progress}
    \inner{\lambdav}{\fv(\wv_T)} - \inner{\lambdav}{\fv(\wv)} \leq \frac{\|\wv_0-\wv\|^2}{2 \eta T}.
\end{align}
The proof is now complete.
\end{proof}

To exploit \eqref{eq:cvx-progress} as tightly as possible, we simply take $\wv=\wv_* = \argmin_{\wv} \inner{\lambdav} {\fv(\wv)}$, which is weakly Pareto optimal for convex $\fv$ (and Pareto optimal for strictly convex $\fv$).
Thus, the iterate $\wv_t$ converges at rate $O(1/t)$ in terms of the $\lambdav$-averaged function value. 

Furthermore, note that $\lambdav$ is in the simplex, a compact set. Thus, it possesses a convergent subsequence with limit $\lambdav_{\star}$. Assuming that the corresponding iterates are bounded, then passing to the limit in inequality \eqref{eq:cvx-progress} shows that every accumulation point $\wv_{\star}$ is weakly Pareto optimal, since $\wv_{\star}$ minimizes the scalarized objective $\inner{\lambdav_{\star}}{\fv}$. Additionally, when  $\lambdav_{\star} \in \operatorname{ri}\left(\Delta\right)$, $\wv_\star$ is actually Pareto optimal.

\paragraph{Remark.} For non-conflicting directions, monotonicity is guaranteed provided the descent direction $\dv$ lies in the interior of the dual cone $\cone^*(J_{\fv}(\wv))$ (as in MGDA, UPGrad, or Nash-MTL) and the step size $\eta$ is chosen appropriately. In degenerate cases where $\dv$ lies on the boundary of $\cone^*(J_{\fv}(\wv))$, additional mechanisms such as line search or perturbation can be adopted to maintain monotonicity.

To see this, apply the L-smooth inequality for all $f_k$, we have:
\begin{align}
    f_k\left(\mathbf{w}_t-\eta_t \mathbf{d}_t\right) \leq f_k\left(\mathbf{w}_t\right)-\eta_t\left\langle\nabla f_k\left(\mathbf{w}_t\right), \mathbf{d}_t\right\rangle+\frac{L}{2} \eta_t^2\left\|\mathbf{d}_t\right\|^2, ~\forall k
\end{align}
Using the strengthened non-conflicting condition,
\begin{align}
    \left\langle\nabla f_k(\mathbf{w}_t), \mathbf{d}_t\right\rangle >0, ~\forall k
\end{align}
and choosing \begin{align}
    0 < \eta_t \leq \frac{\min_k \left\langle\nabla f_k(\mathbf{w}_t), \mathbf{d}_t\right\rangle}{L\left\|\mathbf{d}_t\right\|^2}
\end{align} 
yields $\fv(\mathbf{w}_t - \eta_t \mathbf{d}_t) \leq \fv(\mathbf{w}_t)$.

\section{Experiment details}
\label{appendix:exp}

\subsection{Non-conflicting gradient aggregators}

\subsubsection{Methods}
\textbf{Existing aggregators.} Whenever possible, we stick to the official implementations of all methods, and otherwise use the TorchJD repository \citep{quinton2024jacobian} as a reference. For Nash-MTL \citep{NavonSAMKCF22}, we adopt the official implementation's default, which always clips the aggregated update direction to satisfy $\|\dv_t\|=1$. All examined methods are run in their deterministic, full-batch form, without momentum. For the normalized variants (\eg, Nash-MTL*, UPGrad*, DualProj*), we keep the original implementations and simply re-scale $\dv_t$ to lie in the convex hull of gradients $\{\gv_k\}$ by normalizing the weighting coefficients, $\lambdav_t \gets \frac{\lambdav_t}{\|\lambdav_t\|_1}$, ensuring that $\lambdav_t \in \Delta$.

\textbf{Mixed Aggregator Scheduling (MAS).} 
For MAS, we consider two schedulings:  
(1) Uniform random selection (`Rand'), where each iteration one aggregator is randomly chosen from the pool $\{\text{MGDA, Nash-MTL*, UPGrad*, DualProj*}\}$; (2) Round-robin every $n$ iterations (`RR($n$)'), where each aggregator in the pool is applied for $n$ consecutive iterations before switching to the next.

\textbf{Misc.} The learning rate $\eta$ used for synthetic problems is $0.001$, while for fairness classification benchmark is $0.005$.

\subsubsection{Synthetic problem details}
Here we provide explicit definitions for VLMOP2 and Omnitest objectives used in our paper.

\textbf{VLMOP2 \citep{vlmop2paper}.} 
\begin{align}
\min_{\mathbf{x} \in \mathbb{R}^n} \quad 
& f_1(\mathbf{x}) = 1 - \exp\!\left(-\sum_{i=1}^n \left(x_i - \tfrac{1}{\sqrt{n}}\right)^2\right), \\
& f_2(\mathbf{x}) = 1 - \exp\!\left(-\sum_{i=1}^n \left(x_i + \tfrac{1}{\sqrt{n}}\right)^2\right), \\
\text{s.t.} \quad 
& -2 \leq x_i \leq 2, \quad i=1,\dots,n.
\end{align}

\textbf{Omnitest \citep{Omnitest}.} 
\begin{align}
\min_{\mathbf{x} \in \mathbb{R}^n} \quad 
& f_1(\mathbf{x}) = \sum_{i=1}^n \sin(\pi x_i), \\
& f_2(\mathbf{x}) = \sum_{i=1}^n \cos(\pi x_i), \\
\text{s.t.} \quad 
& 0 \leq x_i \leq 6, \quad i=1,\dots,n.
\end{align}

For both problems, although $\xv$ is formally constrained, we initialize it in the interior and ensure that the entire trajectory—including the Pareto solution it converges to—remains in the interior. This allows us to empirically treat them as unconstrained \MOO problems and apply all the gradient aggregation methods considered in this paper.

\subsubsection{Fairness classification}
\textbf{Setup.} For the fairness classification task on the \textbf{Adult} dataset, we follow the LibMOON benchmark \citep{Zhang2024libmoon} for both dataset preprocessing and model architecture. Specifically, we use the function \texttt{libmoon.util.mtl.get\_dataset("adult")} to generate the train, validation, and test splits. For the model, we adopt LibMOON’s \texttt{M4} \texttt{fair\_model} architecture: a fully connected neural network consisting of three hidden layers of dimension $256$ each, with ReLU activations. The binary classification task is whether the annual income is greater than \$50K.

We use the Difference of Equalized Odds (DEO) as our fairness metrics. Following \citet{Hardt2016equality}, we define
\begin{align}
    \text{DEO1} \;=\; \big|\Pr\{\widehat{Y}=1 \mid A=0, Y=1\} - \Pr\{\widehat{Y}=1 \mid A=1, Y=1\}\big|, \\
    \text{DEO2} \;=\; \big|\Pr\{\widehat{Y}=1 \mid A=0, Y=0\} - \Pr\{\widehat{Y}=1 \mid A=1, Y=0\}\big|.
\end{align}
where $\widehat{Y}$ denotes the model’s predicted label, $A$ the sensitive attribute (\eg, gender), and $Y$ the ground-truth label.  
To make these metrics differentiable, we apply a $\tanh$ relaxation that replaces the indicator function $\mathbf{1}\{p_i \geq 0.5\}$, yielding smooth surrogate losses. Thus, the three objectives are: $f_1$, the binary cross-entropy loss; $f_2$, the relaxed DEO1; and $f_3$, the relaxed DEO2.

\textbf{More results.}
Here we provide additional experimental results for fairness classification; see \Cref{fig:fairness_two_1} and \Cref{fig:fairness_mixed_seed100}. 

We observe that the Pareto stationarity measure $\gamma(\wv_t)$ converges to $0$ for all methods except Nash-MTL (without normalization), which suffers from overshooting because $\|\dv\|$ is fixed at $1$, leading to instability near Pareto stationarity. Applying convex-hull normalization to Nash-MTL yields smoother convergence, and a similar but less pronounced effect is also observed for UPGrad.

\begin{figure}[hbt]
    \centering
    \includegraphics[width=0.48\linewidth]{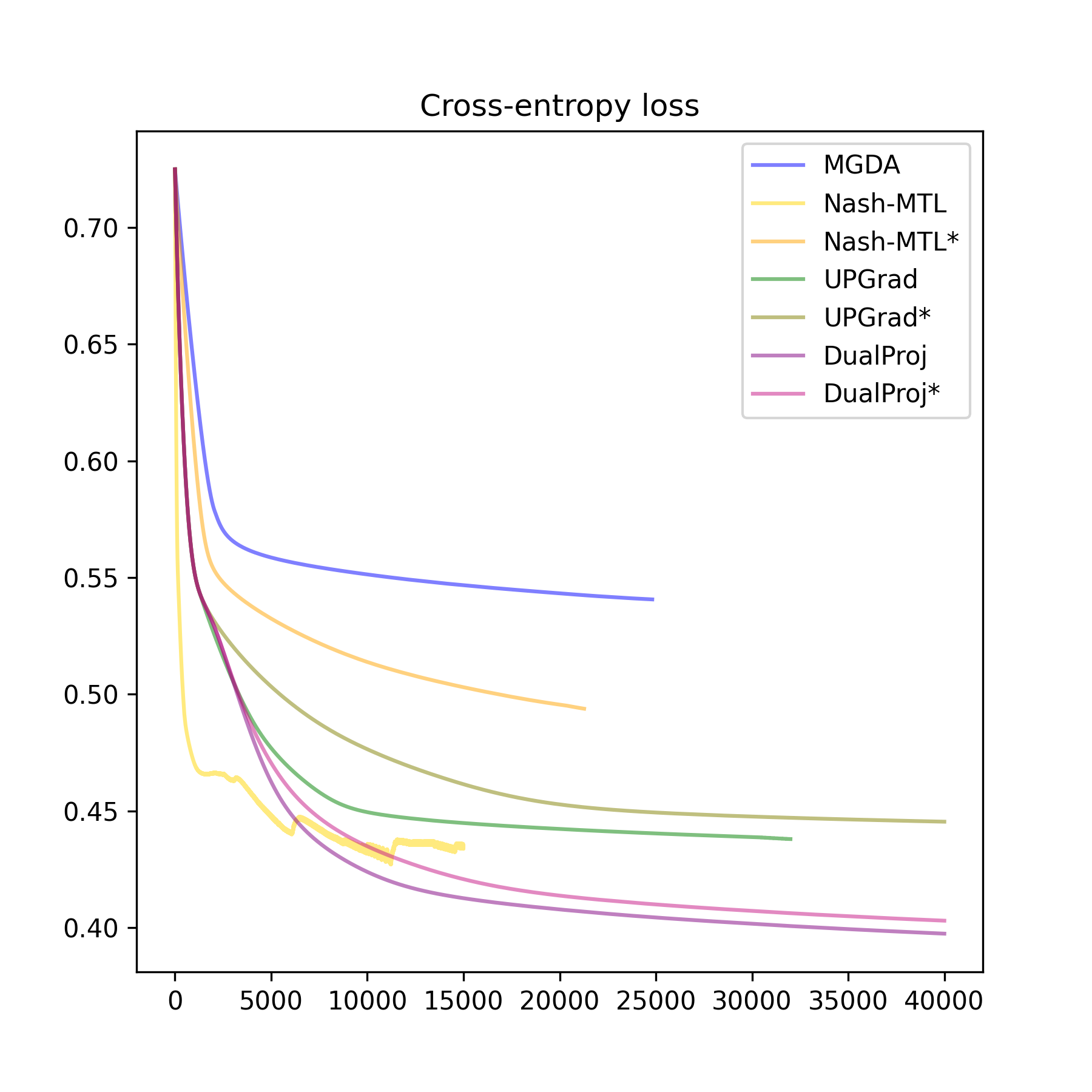}
    \includegraphics[width=0.48\linewidth]{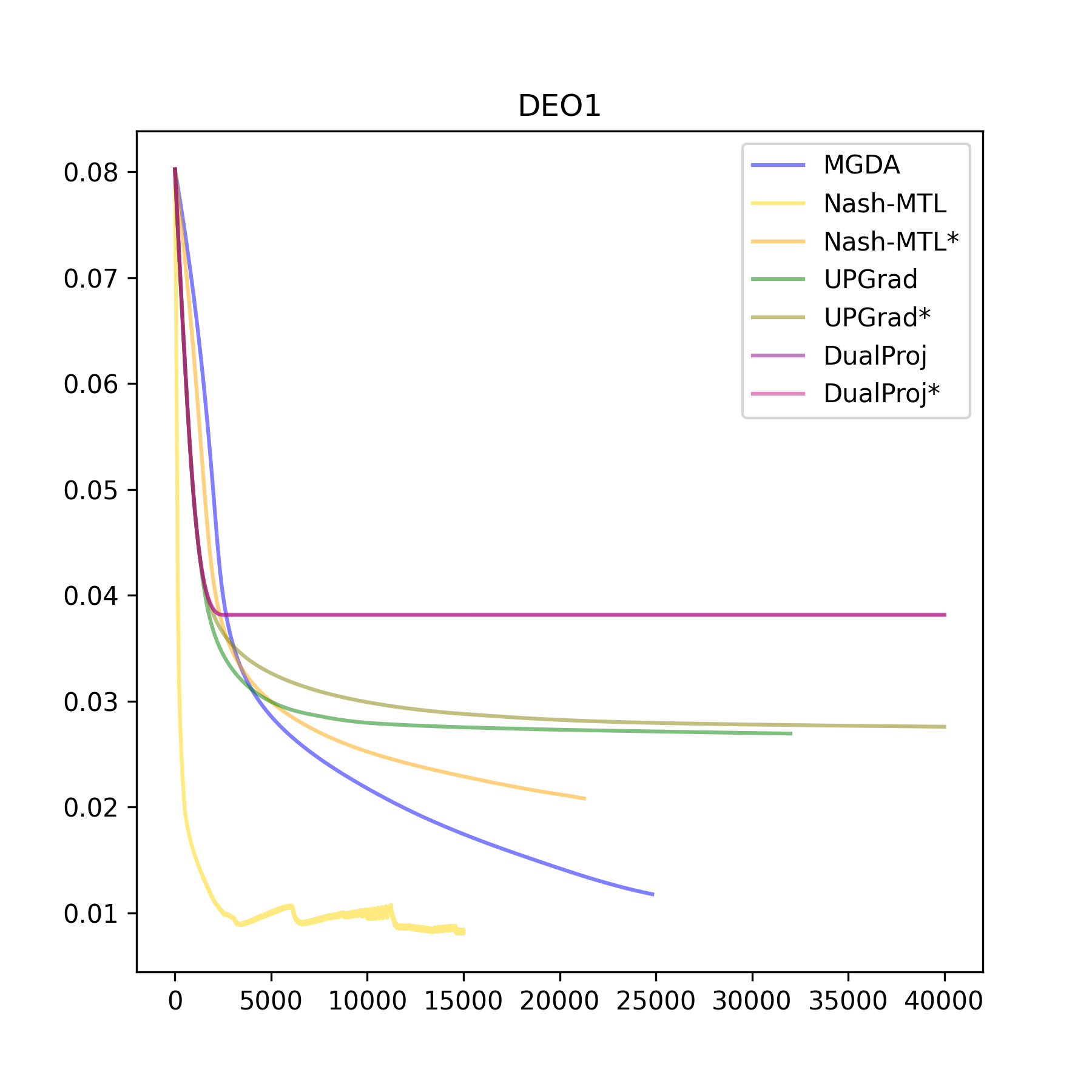}
    \includegraphics[width=0.48\linewidth]{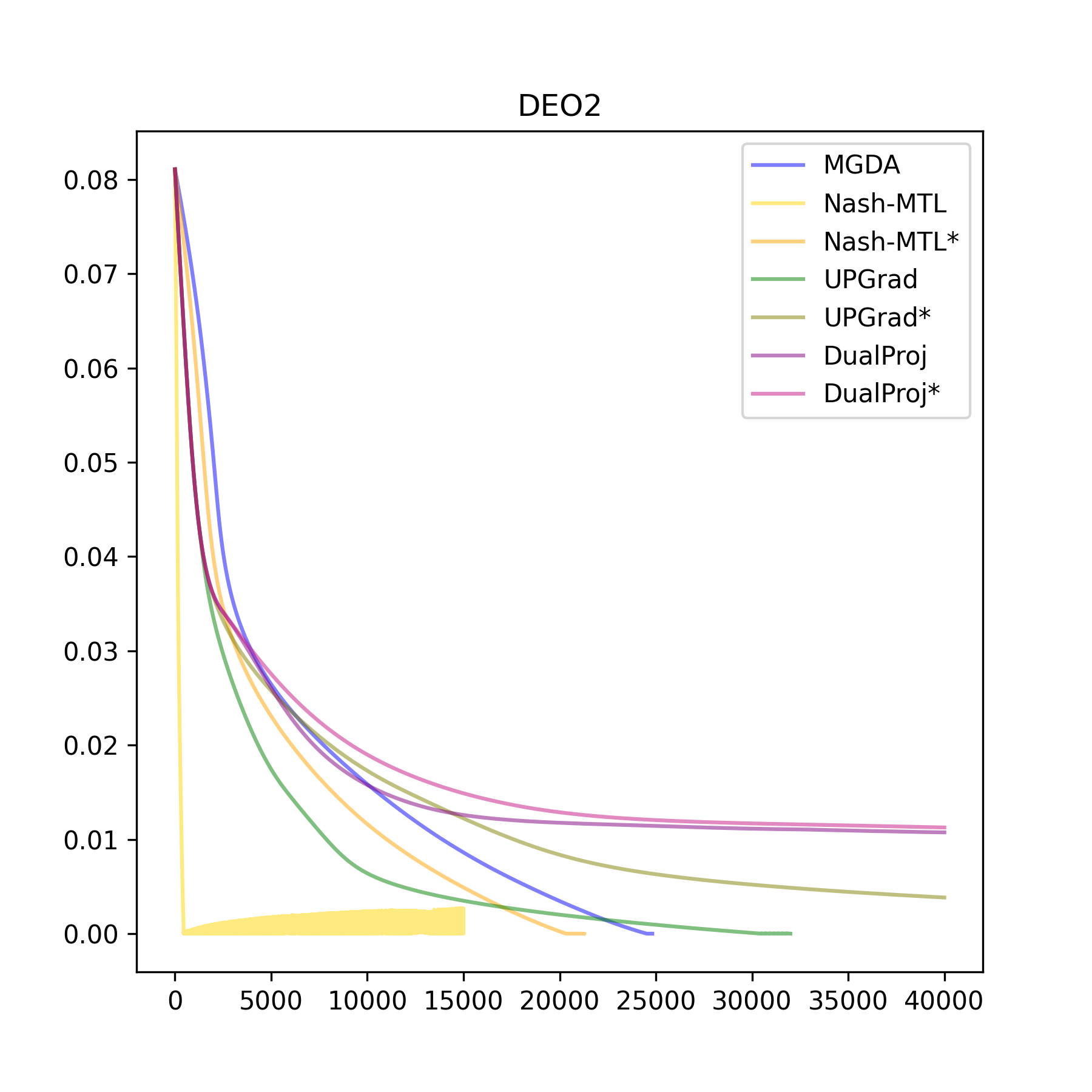}
    \includegraphics[width=0.48\linewidth]{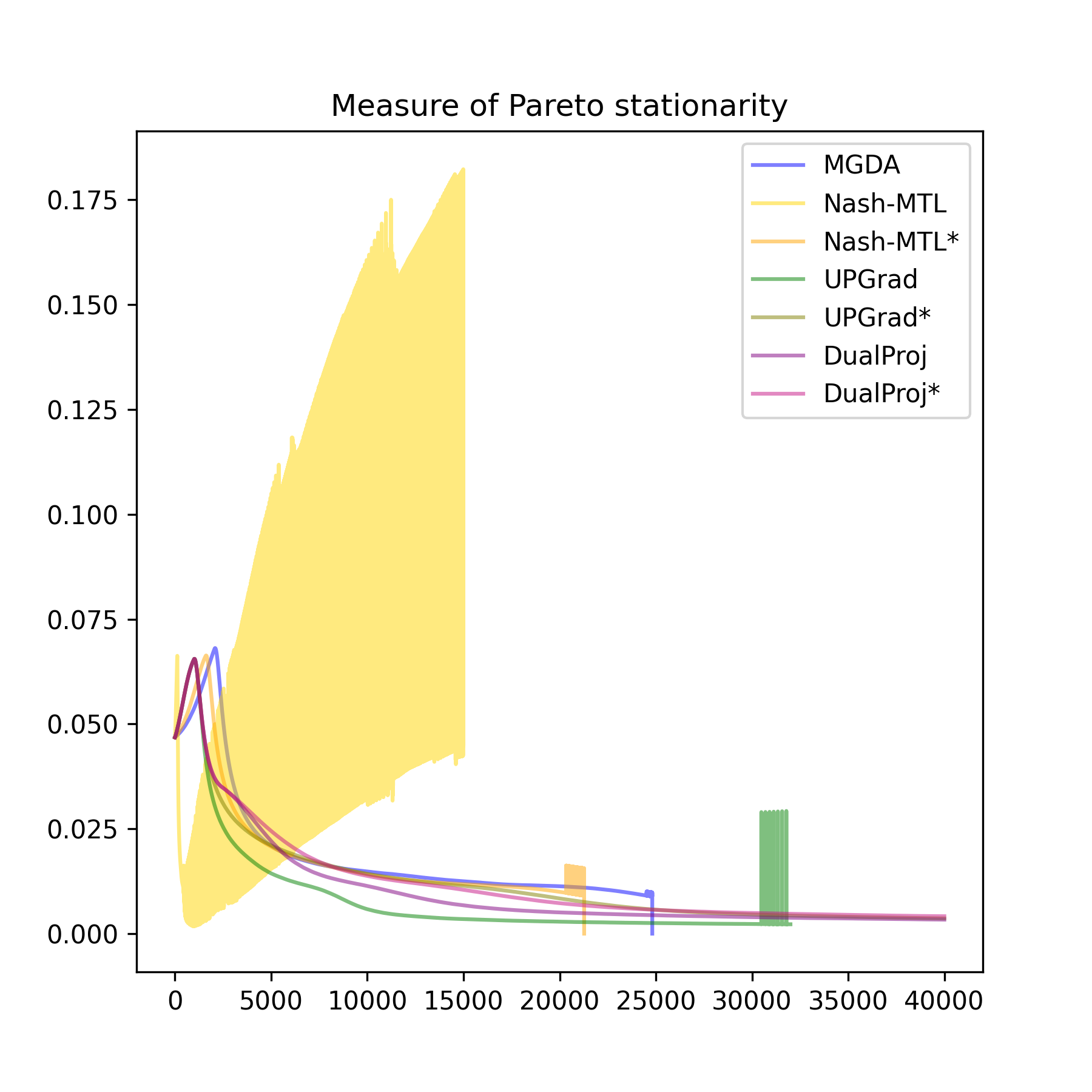}
    \caption{Non-conflicting aggregators on LibMOON fairness classification benchmark. Nash-MTL is unstable, while Nash-MTL* (the normalized variant) is smooth and stable.
    }
    \label{fig:fairness_two_1}
\end{figure}

\begin{figure}[hbt]
    \centering
    \includegraphics[width=0.48\linewidth]{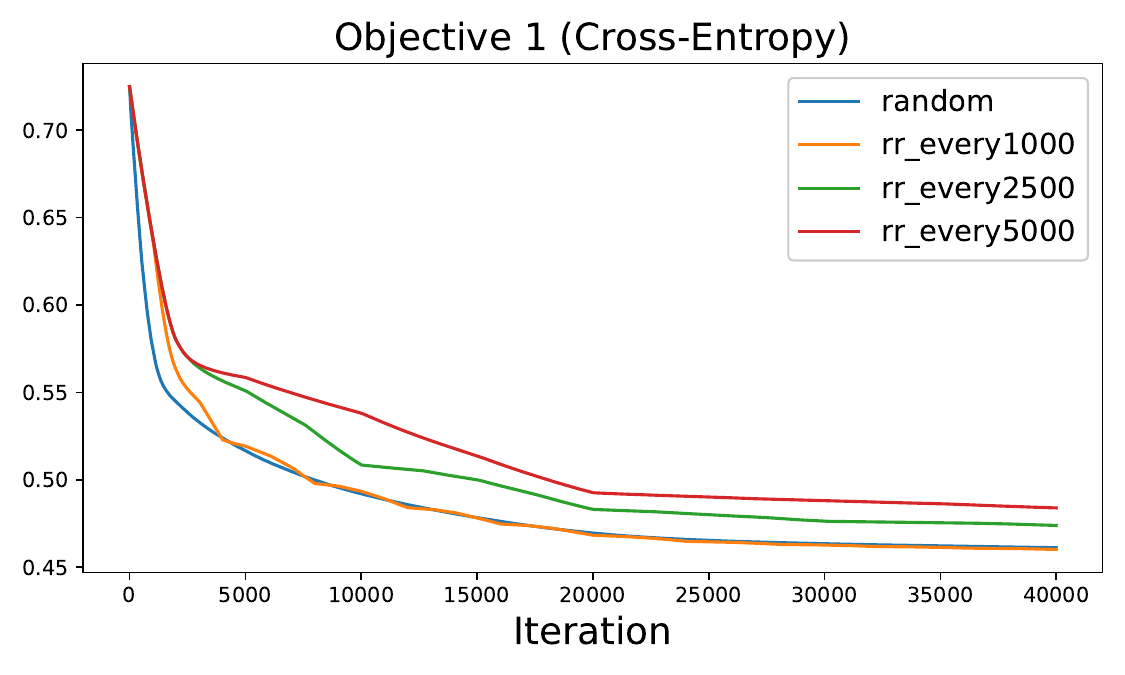}
    \includegraphics[width=0.48\linewidth]{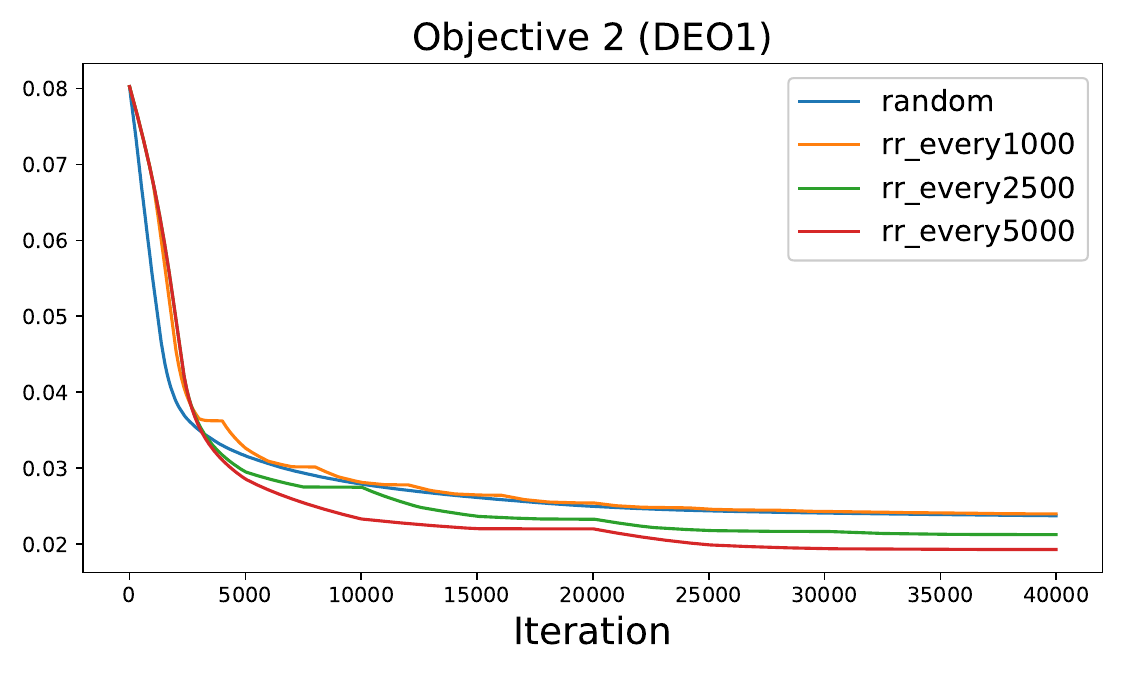}
    \includegraphics[width=0.48\linewidth]{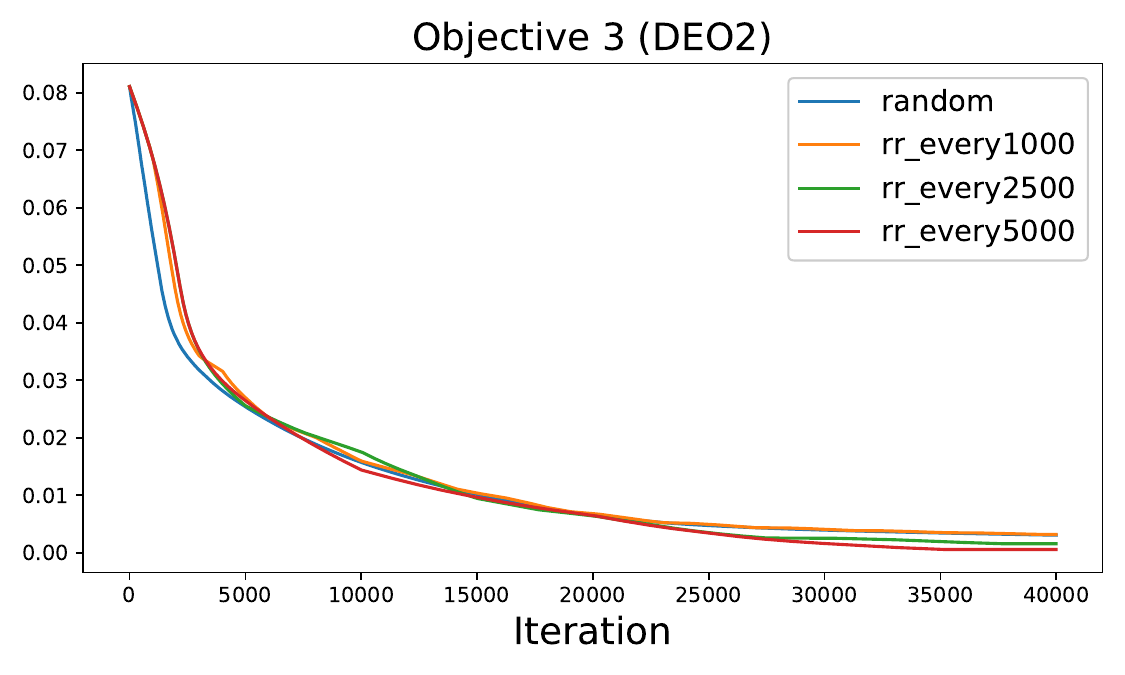}
    \includegraphics[width=0.48\linewidth]{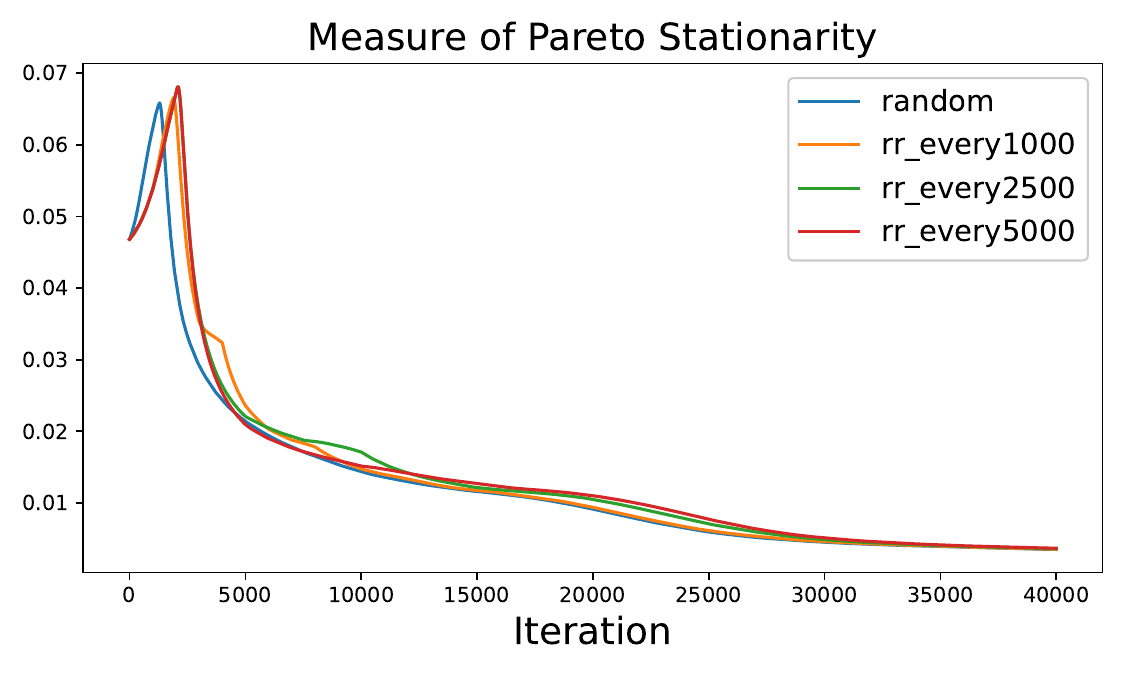}
    \caption{Mixed aggregator scheduling on fairness classification benchmark.
    }
    \label{fig:fairness_mixed_seed100}
\end{figure}

\clearpage

\subsection{Capped MGDA experiments detail}
\textbf{Data.} We conduct experiments on the CIFAR-10 dataset in the adversarial federated learning setting. 
To create a non-i.i.d. partition, we follow the sampling procedure of \citet{HuSZY22}. 
Specifically, we first sort all data points by class and then split them consecutively into 250 shards of 200 images each, where each shard contains images from a single class. 
Each client is randomly assigned 10 distinct shards, resulting in 2000 instances per client. 
This produces heterogeneous data distributions across clients, as each client has access to different subsets of class labels. 
For example, Client~$0$’s data includes class labels $\{0,3,4,5,6,7,9\}$ while lacking labels $\{1,2,8\}$.

\textbf{Model.} We use the same lightweight CNN for MGDA, Capped-MGDA, and the centralized-training baseline; see \Cref{table:simplecnn_arch} for the configuration.

\begin{table}[hbt]
    \centering
    \caption{Network architecture for CIFAR-10 experiments with \texttt{CNN}.}
    \begin{tabular}{l@{\hspace{1.5em}}l@{\hspace{1.5em}}l}
        \toprule[1.2pt]
        \textbf{Layer Type}  & \textbf{Configuration}             & \textbf{Activation} \\ \midrule[0.8pt]
        Conv2D               & $3 \times 32$, kernel size $5$     & ReLU \\
        BatchNorm2D          & 32 channels                         & --   \\
        MaxPool2D            & kernel size $2$, stride $2$         & --   \\
        Conv2D               & $32 \times 32$, kernel size $5$     & ReLU \\
        BatchNorm2D          & 32 channels                         & --   \\
        MaxPool2D            & kernel size $2$, stride $2$         & --   \\
        Fully Connected      & $32 \times 5 \times 5 \to 384$      & ReLU \\ 
        Fully Connected      & $384 \to 192$                       & ReLU \\
        Fully Connected      & $192 \to 10$                        & -- \\ 
        \bottomrule[1.2pt]
    \end{tabular}
    \label{table:simplecnn_arch}
\end{table}

\textbf{Adversarial FL setting.}
In the adversarial federated learning setup, during each epoch’s gradient aggregation a single malicious attacker injects a crafted gradient that opposes one randomly selected participant’s gradient. Concretely, letting $k$ be sampled uniformly from the clients, the injected gradient is $\gv_{\mathrm{adv}} \;=\; -\,\gv_k \;+\; \epsilon\,\mathcal{N}(0, I)$,
with noise scale $\epsilon = 0.01$.

\textbf{Misc.} We use a learning rate of $\eta=0.01$, train for $1000$ epochs, and adopt full-batch training to ensure determinism.

\subsubsection{Additional Results on Capped MGDA and the Adversarial Federated Learning Setting}
In this subsection, we provide additional empirical results to further illustrate the behavior of Capped MGDA and other methods under adversarial federated learning. These results complement the main experiments in Section 5.2 by offering finer-grained insights as well as comparisons with related variants such as MGDA with clipping.

Table~\ref{tab:adv_fl_accuracy} reports the final global test accuracy across a broader set of aggregation methods. Empirically, we observe that MGDA performs poorly under adversarial gradients, whereas projection-based methods such as UPGrad* and DualProj* remain substantially more robust, with NashMTL* showing moderate performance. Notably, Capped MGDA with a small cap (i.e. $C=0.1$) achieves accuracy comparable to the strongest baselines, indicating that restricting the influence of the adversarial client substantially improves robustness.

Figure~\ref{fig:cap_vs_clip} compares Capped MGDA with a naive ``MGDA + coefficient-clipping" variant using the same threshold. The performance gap highlights that simply clipping the MGDA coefficients is insufficient; the constrained optimization formulation underlying Capped MGDA is non-trivial and essential for producing effective descent directions.

Figure~\ref{fig:avg_acc_multiC} visualizes average client accuracy during training, for a range of cap values $C$. We observe a consistent trend: smaller $C$ leads to higher robustness and improved accuracy. This aligns with the intuition that a tighter cap limits the adversarial client’s ability to distort the aggregated gradient. As $C$ decreases, the aggregated direction becomes more stable across clients, leading to smoother training dynamics and better overall performance.

\begin{table}[hbt]
\centering
\caption{Comparison of global test accuracy across methods under the adversarial federated learning setting. 
All values report mean accuracy of $10$ clients after $1000$ training epochs.}
\label{tab:adv_fl_accuracy}
\begin{tabular}{l c}
\toprule
\textbf{Method} & \textbf{Global Test Accuracy} \\
\midrule
\rowcolor{gray!15}
No adversary baseline     & 0.541 \\
MGDA                           & 0.211 \\
UPGrad*                        & 0.511 \\
DualProj*                      & \textbf{0.515} \\
NashMTL*                       & 0.433 \\
Capped MGDA ($C = 0.1$)        & \textbf{0.515} \\
\bottomrule
\end{tabular}
\end{table}

\begin{figure}[hb]
    \centering
    \includegraphics[width=0.5\linewidth]{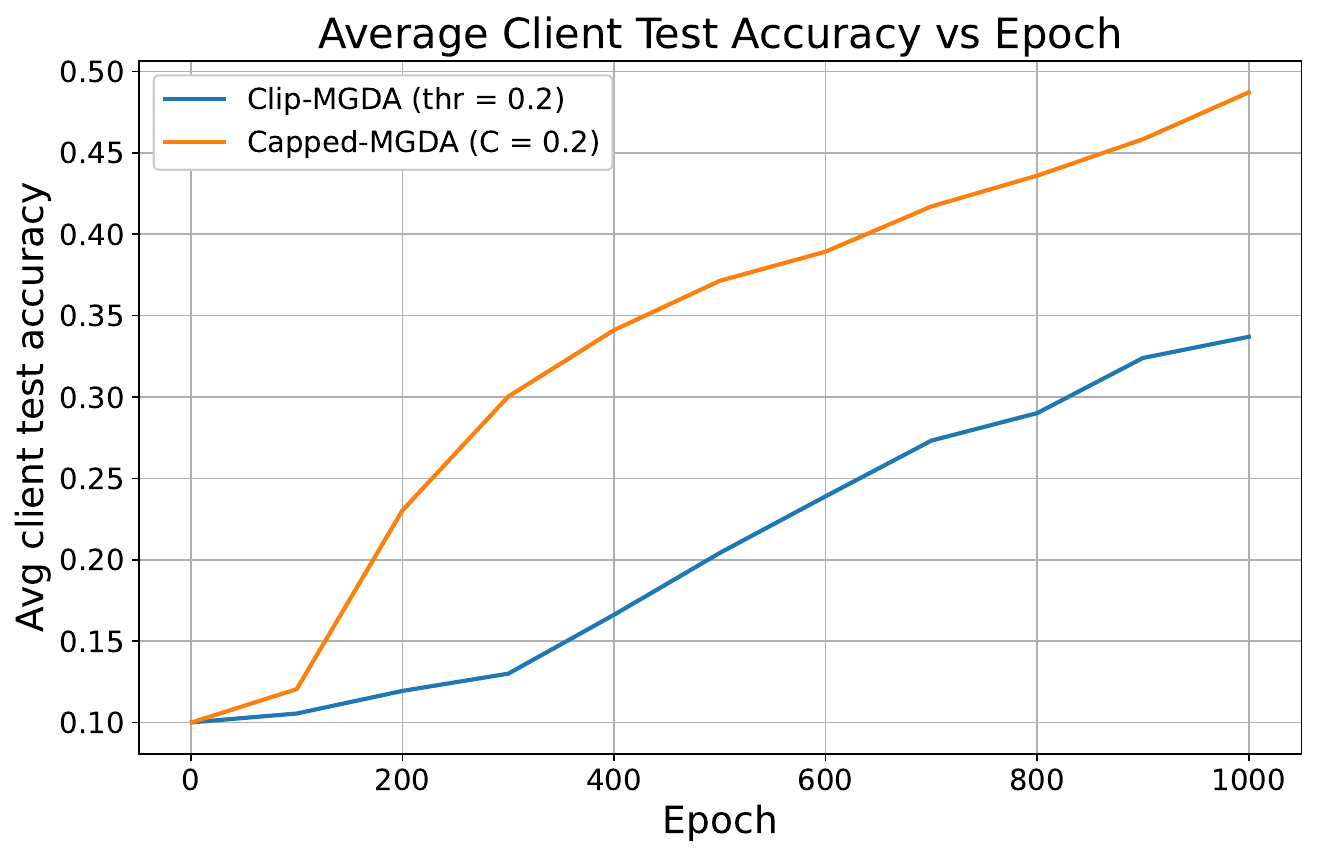}
    \caption{Capped MGDA vs `MGDA + coefficient clipping'. We observe the later is not as effective, given the same threshold $0.2$.}
    \label{fig:cap_vs_clip}
\end{figure}

\begin{figure}[hb]
    \centering
    \includegraphics[width=0.5\linewidth]{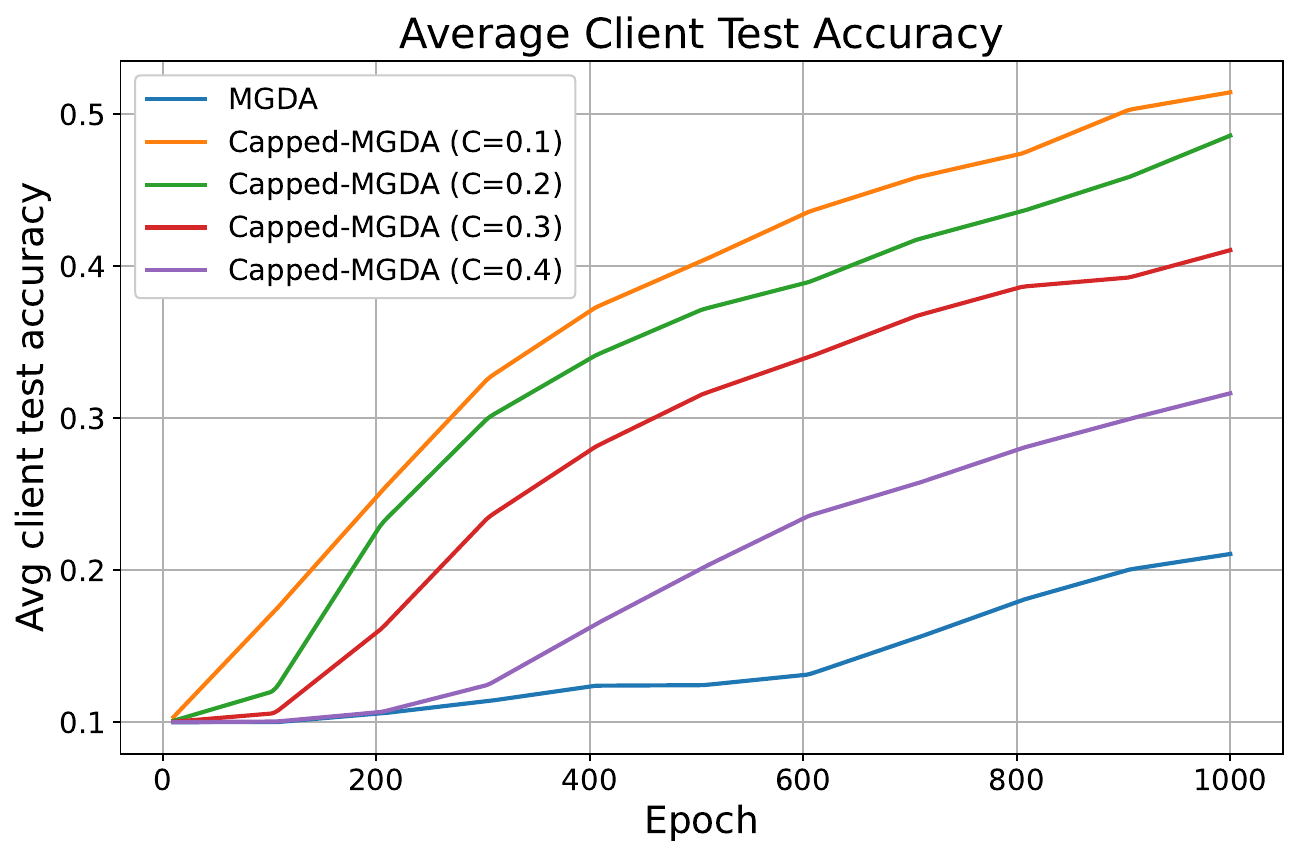}
    \caption{Average client accuracy vs Epoch. We can see a clear improvement when $C$ becomes smaller, which means more robustness by limiting the impact of adversarial gradient.}
    \label{fig:avg_acc_multiC}
\end{figure}

\end{document}